\tikzset{>={Latex[width=1.2mm,length=1.5mm]}}
\definecolor{LightCyan}{rgb}{0.7,1,1}
\definecolor{LightGrey}{rgb}{0.95,0.95,0.95}
\definecolor{DarkGrey}{rgb}{0.4,0.4,0.4}
\definecolor{DarkBlue}{rgb}{0,0,0.4}
\DeclareMathOperator*{\supp}{supp}
\DeclareMathOperator*{\sign}{sgn}
\DeclareMathOperator*{\vari}{var}
\DeclareMathOperator*{\cov}{cov}
\DeclareMathOperator*{\E}{\mathbb{E}}
\DeclareMathOperator*{\prob}{Pr}
\DeclareMathOperator*{\proj}{\mathcal{P}_K}
\DeclareMathOperator*{\support}{\mathcal{S}_K}
\DeclareMathOperator*{\sparse}{\mathcal{H}_K}
\DeclareMathOperator{\bigo}{{\mathcal{O}}}
\DeclareMathOperator{\littleo}{{\scriptstyle\mathcal{O}}}
\DeclareMathOperator*{\polylog}{polylog}
\let\originalleft\left
\let\originalright\right
\renewcommand{\left}{\mathopen{}\mathclose\bgroup\originalleft}
\renewcommand{\right}{\aftergroup\egroup\originalright}
\newcommand{\bw}{{\bm w}}
\newcommand{\bx}{{\bm x}}
\newcommand{\be}{{\bm e}}
\newcommand{\by}{{\bm y}}
\newcommand{\ba}{{\bm a}}
\newcommand{\bs}{{\bm s}}
\newcommand{\bl}{{\bm l}}
\newcommand{\bp}{{\bm p}}
\newcommand{\bpsi}{{\bm \psi}}
\newcommand{\bweq}{{\bw_{\mathrm{eq}}}}
\newcommand{\bweqi}{{\bw_{\mathrm{eq}}^{\left\{i\right\}}}}
\newcommand{\bweqt}{{\bw_{\mathrm{eq}}^{\left\{t\right\}}}}
\newcommand{\bweqk}{{\bw_{\mathrm{eq}}^{\left\{k\right\}}}}
\newcommand{\beq}{b_{\mathrm{eq}}}
\newcommand{\beqi}{{b_{\mathrm{eq}}^{\left\{i\right\}}}}
\newcommand{\overbar}[1]{\mkern 1.5mu\overline{\mkern-1.5mu#1\mkern-1.5mu}\mkern 1.5mu}
\newcommand{\sbar}{{\overbar{s}}}
\newcommand{\bxbar}{\bm{{\overbar{x}}}}
\newcommand{\bsbar}{{\bm{\overbar{s}}}}
\newcommand{\bweqbar}{{\overbar{\bw}_{\mathrm{eq}}}}
\newcommand{\beqbar}{{\overbar{b}_{\mathrm{eq}}}}
\newcommand{\bxhat}{\bm{\hat{x}}}
\newcommand{\subM}{{\mathchoice{}{}{\scriptscriptstyle}{}M}}
\theoremstyle{plain}
\newtheorem{theorem}{Theorem}
\newtheorem{lemma}{Lemma}
\newtheorem{proposition}{Proposition}
\theoremstyle{definition}
\theoremstyle{remark}
\newtheorem*{remark}{Remark}
\newtheorem*{assumption}{Assumption}
\newtheorem*{assumptions}{Assumptions}
\newenvironment{proof-sketch}{\noindent\textit{Proof sketch.}\hspace*{0.1em}}{\qed\bigskip}
\newcommand\splot[1]{\includegraphics[width=4mm]{figs/img_#1}}
\lstdefinestyle{python}{
  belowcaptionskip=1\baselineskip,
  breaklines=true,
  frame=shadowbox,
  rulesepcolor=\color{gray},
  xleftmargin=\parindent,
  language=Python,
  showstringspaces=false,
  basicstyle=\footnotesize\ttfamily,
  keywordstyle=\bfseries\color{deepblue},
  moredelim=**[s][\color{blue}]{'''}{'''},
  commentstyle=\itshape\color{magenta},
  identifierstyle=\color{black},
  stringstyle=\color{red},
}
\lstdefinestyle{output}{
  belowcaptionskip=1\baselineskip,
  breaklines=true,
  frame=L,
  basicstyle=\footnotesize\ttfamily,
  xleftmargin=\parindent,
}
\begin{document}

\title{Robust Adversarial Learning via\\ Sparsifying Front Ends}

\author{Soorya~Gopalakrishnan\IEEEauthorrefmark{1},
Zhinus~Marzi\IEEEauthorrefmark{1},
Metehan~Cekic\IEEEauthorrefmark{2},
Upamanyu~Madhow\IEEEauthorrefmark{2},
Ramtin~Pedarsani\IEEEauthorrefmark{2}
\thanks{
\IEEEauthorrefmark{1}Qualcomm, Inc. Work done while at Department of Electrical and Computer Engineering, University of California, Santa Barbara. 

\IEEEauthorrefmark{2}Department of Electrical and Computer Engineering, University of California, Santa Barbara.

Emails: \{soorya197, zhinus\_marzi\}@gmail.com, \{metehancekic, madhow, ramtin\}@ucsb.edu}}

\maketitle


\begin{abstract}%
It is by now well-known that small adversarial perturbations can induce classification errors in deep neural networks. 
In this paper, we take a bottom-up signal processing perspective to this problem and show that a systematic exploitation of \textit{sparsity} 
in natural data is a promising tool for defense.  
For linear classifiers, we show that a sparsifying front end is provably effective against $\ell_{\infty}$-bounded attacks, 
reducing output distortion due to the attack by a factor of roughly $K/N$ where $N$ is the data dimension and $K$ is the sparsity level.
We then extend this concept to deep networks, showing that a ``locally linear'' model can be used to develop a theoretical foundation for crafting attacks and defenses. 
We also devise attacks based on the locally linear model that outperform the well-known FGSM attack. 
 We supplement our theoretical results with experiments on the MNIST and CIFAR-10 datasets, showing the efficacy of the proposed sparsity-based defense schemes.
\end{abstract}


\section{Introduction}

Since \citet{szegedy2013intriguing} and \citet{goodfellow2014adversarial} pointed out the vulnerability of deep networks to small {\it adversarial} perturbations, there has been an explosion of research effort in adversarial attacks and defenses \cite{fawzi2017review}. In this paper, we attempt to provide fundamental insight into both the vulnerability of deep networks to carefully designed perturbations, and a systematic, theoretically justified framework for designing defenses against adversarial perturbations.

Our starting point is the original intuition in \citet{goodfellow2014adversarial} that deep networks are vulnerable to small perturbations 
not because of their complex, nonlinear structure, but because of their being ``too linear''.  Consider the simple example
of a binary linear classifier $\bw$ operating on $N$-dimensional input $\bx$, producing the decision statistic $g( \bx ) = \bw^T \bx$.
The effect of a perturbation $\be$ to the input is given by $g( \bx + \be ) - g( \bx ) = \bw^T \be$.  If $\be$ is bounded by
$\ell_{\infty}$ norm (i.e., $\left\Vert \be \right\Vert_{\infty} = \max_i |e_i| \leq \epsilon$), then the largest perturbation that can be produced at the output is caused by
$\be = \epsilon\sign ( \bw )$, and the resulting output perturbation is  $\Delta = \epsilon \sum_{i=1}^N |w_i| = \epsilon \left\Vert \bw \right\Vert_1$.  The latter can be
made large unless the $\ell_1$ norm of $\bw$ is constrained in some fashion.  To see what happens without such a constraint, suppose that $\bw$
has independent and identically distributed components, with bounded expected value and variance. It is easy to see
that $\left\Vert \bw \right\Vert_1 = \Theta(N)$\footnote{See Subsection \ref{notation} for the definitions of the order notation $\Theta(.)$, $\mathcal{O}(.)$, $\omega(.)$, $\littleo(.)$.} with high probability,
 which means that the effect of $\ell_{\infty}$-bounded input perturbations can be blown up at the output as the input dimension increases.

The preceding linear model provides a remarkably good approximation for today's deep CNNs.  Convolutions, subsampling, and average pooling are inherently linear. 
A ReLU unit is piecewise linear, switching between slopes at the bias point.   
A max-pooling unit is a switch between multiple linear transformations.
Thus, the overall transfer function between the input and an output neuron can be written as $\bweq ( \bx )^T \bx$, where $\bweq (\bx)$ is an equivalent ``locally linear'' transformation that
exhibits input dependence through the switches corresponding to the ReLU and max pooling units. For small perturbations, relatively few switches flip, so that $\bweq ( \bx + \be ) \approx \bweq ( \bx )$.
Note that the preceding argument also applies to more general classes of nonlinearities: the locally linear approximation is even better for sigmoids, since slope changes are gradual rather than drastic. These observations motivate us to begin with a study of linear classifiers, before extending our results to neural networks via a locally linear model. 

As we discuss in Section \ref{sec:related}, 
the current state-of-the-art defense is based on retraining networks with adversarial examples \cite{madry2017towards}. However, the lack of insight into what it does fundamentally limits how much we can trust the resulting networks: the sole means of verifying robustness is through empirical evaluations. In contrast, our goal is to provide a systematic approach with theoretical guarantees, with the potential of yielding a longer-term solution to the design of robust neural networks. We take a bottom-up signal processing perspective and
exploit the rather general observation that input data must be sparse in {\it some} basis in order to avoid the curse of dimensionality.
Sparsity is an intuitively plausible concept: we understand that humans reject small perturbations by focusing on the key features that stand out. 
Our proposed approach is based on this intuition.
 In this paper, we show via both theoretical results and experiments that a sparsity-based defense is provably effective against $\ell_\infty$-bounded adversarial perturbations. 

Specifically, we assume that the $N$-dimensional input data has a $K$-sparse representation (where $K\ll N$) in a known orthonormal basis. We then employ a sparsifying front end that projects the perturbed data onto the $K$-dimensional subspace. The intuition behind why this can help is quite clear: small perturbations can add up to a large output distortion when the input dimension is large, and by projecting to a smaller subspace, we can limit the damage. Theoretical studies show that this attenuates the impact of $\ell_{\infty}$-bounded attacks by a factor of roughly $K/N$ (the sparsity level), and experiments show that sparsity levels of the order of 1-5\% give an excellent tradeoff between the accuracy of input representation and rejection of small adversarial perturbations.

\subsection{Contributions}

We develop a theoretical framework to assess and demonstrate the effectiveness of a sparsity-based defense against adversarial attacks on linear classifiers and neural networks. Our main contributions are as follows: 

\begin{itemize}[itemsep=2pt, topsep=2pt, leftmargin=15pt,]
\item For linear classifiers, we quantify the achievable gain of the sparsity-based defense via an ensemble-averaged analysis based on a stochastic model for weights. We prove that with high probability, the adversarial impact is reduced by a factor of roughly $K/N$, where $K$ is the sparsity of the signal and $N$ is the signal's dimension.

\item For neural networks, we use a ``locally linear'' model to provide a framework for understanding the impact of small perturbations. Specifically, we characterize a high SNR regime in which the fraction of switches flipping for ReLU nonlinearities for an $\ell_{\infty}$-bounded perturbation is small.

\item Using the preceding framework, we show that a sparsifying front end is effective for defense, and devise a new attack based on locally linear modeling.

 \item We supplement our theoretical results with experiments on the MNIST \cite{lecun1998gradient} and CIFAR-10 \cite{krizhevsky2009learning} datasets for a variety of recent attacks.

\end{itemize}

\subsection{Notation} \label{notation}

Here we define the notations $\Theta(.)$, $\mathcal{O}(.)$, $\omega(.)$, and $\littleo(.)$:
\begin{itemize}[leftmargin=20pt]
\item $f = \mathcal{O}(g)$ if and only if there exists a constant $C > 0$ such that $|f/g| < C$,
\item $f = \Theta(g)$ if and only if there exist two constants $C_1 , C_2 > 0$ such that $C_1 < |f/g| < C_2$,
\item $f = \omega(g)$ if and only if there does not exist a constant $C > 0$ such that $|f/g| < C$, and
\item $f = \littleo(g)$ if and only if $g = \omega(f)$.
\end{itemize}


\section{Related Work} \label{sec:related}

\subsection{Sparse Representations}

It is well-known that most natural data can be compactly expressed as a sparse linear combination of atoms in \textit{some} basis \cite{mallat1993matching,blumensath2008gradient,bruckstein2009sparse}. Such sparse representations have led to state-of-the-art results in many fundamental signal processing tasks such as image denoising \cite{donoho1994ideal,aharon2006rm}, neuromagnetic imaging \cite{gorodnitsky1997sparse,wright2009sparse}, image super-resolution \cite{yang2010image}, inpainting \cite{elad2005simultaneous}, blind audio source separation \cite{zibulevsky2001blind}, source localization \cite{malioutov2005sparse}, etc. There are two broad classes of sparsifying dictionaries employed in literature: predetermined bases such as wavelets \cite{mallat1993matching,mallat1999wavelet}, and learnt bases which are inferred from a set of training signals \cite{aharon2006rm, skretting2010recursive, rubinstein2013analysis}. 
The latter approach has been found to outperform predetermined dictionaries.
Sparsity has also been suggested purely as a means of improving classification performance \cite{makhzani2013ksparse}, which indicates that the performance penalty for appropriately designed sparsity-based defenses could be minimal.

\subsection{Local Linearity Hypothesis}
The existence of ``blind spots'' in deep neural networks \cite{szegedy2013intriguing} has been the subject of extensive study in machine learning literature \cite{fawzi2017review}.
It was initially surmised that this vulnerability is due to the highly complex, nonlinear nature of neural networks. However, the success of linearization-based attacks such as the Fast Gradient Sign Method (FGSM) \citep{goodfellow2014adversarial} and DeepFool \citep{moosavi2016deepfool} indicates that it is instead due to their ``excessive linearity''. This is further backed up by work on the curvature profile of deep neural networks \citep{fawzi2017classification,ben2016expressivity} showing that the decision boundaries in the vicinity of natural data can be approximated as flat along most directions. Our work on attack and defense is grounded in a locally linear model for neural networks, and the results discussed later demonstrate the efficacy of this approach.

\subsection{Defense Techniques}

Existing defense mechanisms against adversarial attacks can be broadly divided into two categories: (a) empirical defenses, based purely on intuitively plausible strategies, together with experimental evaluations, and (b) provable defenses with theoretical guarantees of robustness. We briefly discuss the empirical work related to our defense, and then give an overview of provable defense techniques.

\subsubsection*{Empirical Defenses}

One approach to defense is to retrain networks with adversarial examples, which works well if the examples are generated using multi-step attacks such as PGD \cite{madry2017towards}. However, this incurs a large computational cost.
A more fundamental drawback of such an approach is the lack of guarantees of robustness, in contrast to our proposed method.
Several empirical defenses utilize preprocessing techniques that are implicitly sparsity-based, including JPEG compression \citep{guo2017inputtransform,das2017keeping}, PCA \citep{bhagoji2017dimensionality} and projection onto generative models \citep{ilyas2017robust,samangouei2018defensegan}. However many such techniques were found to confer robustness purely by obfuscating gradients necessary for the adversary, and they were successfully defeated by gradient approximation techniques developed in \cite{obfuscated-gradients}. In contrast our sparsity-based defense is robust to the attack techniques in \cite{obfuscated-gradients}. 
Overall, the evaluations in such prior work have been purely empirical in nature. 
Our proposed framework, grounded in a locally linear model for neural networks, provides a foundation for systematic pursuit of sparsity-based preprocessing.

\subsubsection*{Provable Defenses}

There is a growing body of work focused on developing \textit{provable} guarantees of robustness against adversarial perturbations \cite{raghunathan2018certified,raghunathan2018semidefinite,kolter2017provable,wong2018scaling,sinha2018certifiable,mirman2018differentiable,hein2017formal,cisse2017parseval,isit_2018,bafna2018thwarting,fawzi2018adversarial,katz2017reluplex,ehlers2017formal,cheng2017maximum,dutta2018output}. We do not provide an exhaustive discussion due to space constraints, but we illustrate some typical characteristics of these approaches.

Many provable defenses focus on retraining the network with an optimization criterion that promotes robustness towards all possible small perturbations around the training data.
For example, \citet{raghunathan2018certified} develop semidefinite programs (SDP) to upper bound the $\ell_1$ norm of the gradient of the classifier around the data point, which they then optimize during training to obtain a more robust network. However, the certificates provided are quite loose.
 A tighter SDP relaxation is developed in \citep{raghunathan2018semidefinite} to certify the robustness of any given network, but it is not used to train a new robust model. Both the SDP certificates are computationally restricted to fully-connected networks.
Another provable defense by \citet{kolter2017provable} employs a linear programming (LP) based approach to bound the robust error. 
This is more efficient than the SDP approach, and can be scaled to larger networks \cite{wong2018scaling}. The LP technique works well for small perturbations, but for larger perturbation sizes there is a drop in the accuracy of natural images. 

A different training technique based on distributionally robust optimization was proposed by \citet{sinha2018certifiable}, providing a certificate of robustness for attacks whose probability distribution is bounded in Wasserstein distance from the original data distribution. Although the theoretical guarantees do not directly translate to norm-bounded attacks, this approach yields good results in practice for small perturbation sizes.
Another defense that works well is that of \citet{mirman2018differentiable}, who use the framework of abstract interpretation to develop various upper bounds on the adversarial loss which can be optimized over during training. 
Some other provable defenses try to limit Lipschitz constants related to the network output function, for example via cross-Lipschitz regularization \citep{hein2017formal} and $\ell_2$ matrix-norm regularization of weights \citep{cisse2017parseval} in order to defend against $\ell_2$-bounded attacks.
However the bounds in \citep{hein2017formal} are limited to 2-layer networks, and \citep{cisse2017parseval} is based on layerwise bounds that have been shown to be loose \citep{raghunathan2018certified,kolter2017provable}. 

All of the above techniques try to train the network so as to make its output less sensitive to input perturbations by modifying the optimization framework employed for network training.  
In contrast, our defense takes a bottom-up signal processing approach, exploiting the sparsity of natural data to combat perturbations at the front end,
while using conventional network training.  It is therefore potentially complementary to defenses based on modifying network training.
Parts of this work appeared previously in our conference paper \cite{isit_2018} which focused on linear classifiers. In this paper, we provide a comprehensive treatment that applies to neural networks, by employing the concept of local linearity.
Another related work that uses sparsity-based preprocessing is \citep{bafna2018thwarting}, which studies $\ell_0$-bounded attacks. These are not visually imperceptible (unlike the $\ell_\infty$-bounded attacks that we study), but they are easy to realize in practice, for example by placing a small sticker on an image. For this class of attacks, \citep{bafna2018thwarting} shows that the sparse projection can be reformulated as a compressive sensing (CS) problem, and obtain a provably good estimate of the original image via CS recovery algorithms. 

There is also a line of work on \textit{verifying} the robustness of a given network by using exact solvers based on discrete optimization techniques such as satisfiability modulo theory \cite{katz2017reluplex,ehlers2017formal} and mixed-integer programming \cite{cheng2017maximum,dutta2018output}. However these techniques have combinatorial complexity (in the worst-case, exponential in network size), and so far have not been scaled beyond moderate network sizes.
\nocite{carlini2016distillationrefuted}


\section{Sparsity Based Defense} \label{sec:sparsity_defense}

\subsection{Problem Setup}

For simplicity we start with binary classification. Given a binary inference model $g: \mathbb{R}^{N} \to \mathbb{R}$, we assume that its input data $\bx \in \mathbb{R}^{N}$ has a $K$-sparse representation ($K\ll N$) in a known orthonormal basis $\Psi$: $\left\Vert \Psi^T\bx\right\Vert_0 \le K$. Let us denote by $\bxhat$ a {\it modified} version of the data sample $\bx$. We now define a performance measure $\Delta$ that quantifies the robustness of $g(\cdot)$:
\begin{equation*}
\Delta \left( \bx, \bxhat\right) = |g(\bxhat) - g(\bx)|.
\end{equation*}
For example, for a linear classifier $g(\bx) = \bw^T \bx$, the performance measure is $\Delta \left( \bx, \bxhat\right) = |\bw^T(\bxhat - \bx)|$.

We now consider a system comprised of the classifier $g(\cdot)$ and two external participants: the adversary and the defense, depicted in Fig. \ref{fig:main_block}.
\begin{enumerate}[label=\arabic*.,leftmargin=18pt, itemsep=3pt]
\item 
The adversary corrupts the input $\bx$ by adding a perturbation $\be$, with the goal of causing misclassification:
\begin{equation*}
\max_{\be} \,{\Delta \left( \bx, \bxhat \right)} \quad \mathrm{s.t.}
\quad \left\Vert\be\right\Vert_\infty<\epsilon.
\end{equation*}
We are interested in perturbations that are visually imperceptible, hence we impose an $\ell_\infty$-constraint on the adversary.  
\item The defense preprocesses the perturbed data via a function $f: \mathbb{R^N} \to \mathbb{R^N}$, with the goal of minimizing the adversarial impact $\Delta$. 
\end{enumerate}
We consider two threat models: a semi-white box scenario where the perturbation is designed based on knowledge of the classifier alone, and a white box scenario where the adversary has complete knowledge of both the defense and classifier. This setup can be easily extended to multiclass classification as we show in Section V-E. For multiple classes, the goal of the adversary is to cause untargeted misclassification, i.e.\ shift the output of the classifier to \textit{any} incorrect class.

\begin{figure}[!t]
  	\centering
	\begin{tikzpicture}[
	rect/.style = {rectangle, draw=black!100, fill=cyan!18, thin, minimum height=7.5mm, minimum width=11mm},
	rect_smaller/.style = {rectangle, draw=black!100, fill=cyan!18, thin, minimum height=7.5mm, minimum width=8mm},
	circ/.style = {circle, draw=black!100, fill=cyan!18, thin, minimum size=7mm},
	outer/.style = {rounded corners=0.2cm, draw=black!70, dashed, inner sep = 2.3mm}
	]
	\node[] (x) {$\bx$};
	\node[circ] (plus) [right = 7mm of x] {$+$};
	\node[] (e) [above = 6mm of plus] {$\be$};
	\node[rect] (f) [right = 13mm of plus] {$f(\cdot)$};
	\node[rect] (w) [right = 12mm of f] {$g(\cdot)$};
	\node[outer, inner sep = 2.2mm, fit = (plus), label=below:\footnotesize{Adversary}] (attack) {};
	\node[outer,fit = (f), label=below:\footnotesize{Defense}] (frontend) {};
	\draw[->] (x) -- (plus);
	\draw[->] (e) -- (plus); 
	\draw[->] (plus) -- node[anchor=south]{$\bxbar$} (f); 
	\draw[->] (f) -- node[anchor=south]{$\bxhat$}(w);
	\end{tikzpicture}
	\caption{Block diagram of the system, depicting an adversarial example $\bxbar = \bx + \be$ (with $\ell_\infty$ constraint on $\be$), a preprocessing defense $f(\cdot)$ and a classifier $g(\cdot)$.}
  	\label{fig:main_block}
\end{figure}
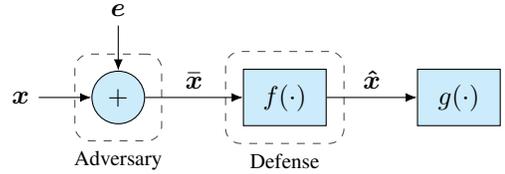

\begin{figure}[!b]
  	\centering
	\begin{tikzpicture}[
	rect/.style = {rectangle, draw=black!100, fill=cyan!18, thin, minimum height=7.5mm, minimum width=11mm, 
	inner sep=2mm},
	rect2/.style = {rectangle, draw=black!100, fill=cyan!1, thin, minimum height=7.5mm, minimum width=11mm, 
	inner sep=1.5mm},
	rect_smaller/.style = {rectangle, draw=black!100, fill=cyan!18, thin, minimum height=7.5mm, minimum width=8mm},
	circ/.style = {circle, draw=black!100, fill=cyan!18, thin, minimum size=7mm},
	outer/.style = {rounded corners=0.2cm, draw=black!100, dashed, inner sep = 2mm}
	]	
	\node[] (x) {$\bx$};
	\node[circ] (plus) [right =4mm of x] {$+$};
	\node[] (e) [above = 4mm of plus] {$\be$};
	\node[rect_smaller] (f) [right = 7mm of plus, align=center] {$\Psi^T$};
	\node[rect] (topk) [right = 3mm of f, align=center] {\footnotesize{Retain} \\[-1pt] \footnotesize{$K$ largest} \\[-1pt] \footnotesize coefficients};
	\node[rect_smaller] (ft) [right = 3mm of topk, align=center] {$\Psi$};
	\node[rect2] (w) [right = 7mm of ft] {\includegraphics[width=.08\textwidth]{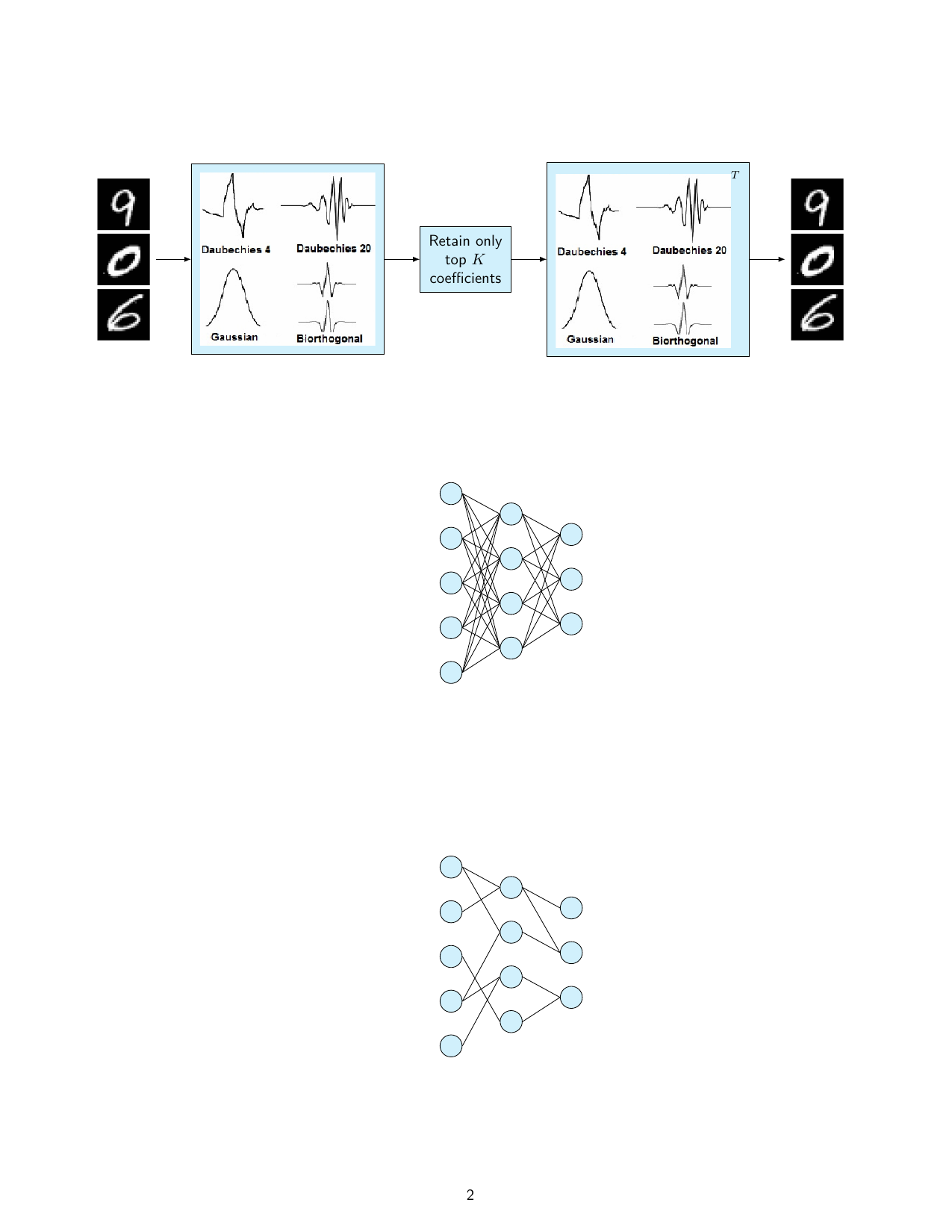}};
	\node[outer, fit = (f) (ft), label=below:\small{Sparsifying front end}, minimum height=22mm] (frontend) {};
	\node[fit = (w), label=below:\huge{$\phantom{df}$}] (network) {};
	\draw[->] (x) -- (plus);
	\draw[->] (e) -- (plus); 
	\draw[->] (f) -- node[anchor=south]{}(topk);
	\draw[->] (topk) -- node[anchor=south]{}(ft);
	\draw[->] (plus) -- node[anchor=south]{$\bxbar\phantom{d}$} (f); 
	\draw[->] (ft) -- node[anchor=south]{$\phantom{d}\bxhat$}(w);
	\end{tikzpicture}
\caption{Sparsifying front end defense: For a basis in which the input is sparse, the input is projected onto the subspace spanned by the $K$ largest basis coefficients.}
\label{fig:front_end}
\end{figure}

\subsection{Sparsifying Front End Description}

We propose a defense based on a \textit{sparsifying front end} that exploits sparsity in natural data to combat adversarial attacks. Specifically, we preprocess the input via a front end that computes a $K$-sparse projection in the basis $\Psi$. Figure \ref{fig:front_end} shows a block diagram of our model for a neural network, depicting an additive perturbation followed by sparsity-based preprocessing. 

Here is an intuitive explanation of how this defense limits adversarial perturbations. 
If the data is exactly $K$-sparse in domain $\Psi$, the front end leaves the input unchanged ($\bxhat=\bx$) when there is no attack ($\be=0$). The front end attenuates the perturbation by projecting it onto the space spanned by the basis functions corresponding to the $K$ retained coefficients. If the perturbation is small enough, then the $K$ retained coefficients corresponding to $\bx$ and $\bx + \be$ remain the same, in which case the neural network sees the original input plus the projected, and hence attenuated, perturbation.

Let $\sparse: \mathbb{R}^N \to \mathbb{R}^N$ represent the block that enforces sparsity by retaining the $K$ coefficients largest in magnitude and zeroing out the rest. We can now define the following:
\begin{itemize}[leftmargin=15pt, itemsep=3pt, topsep=3pt]
\item The support $\support$ of the $K$-sparse representation of $\bx$, and the projection $\proj$ of $\be$ onto the subspace that $\bx$ lies in are defined as follows:
	\begin{align*}
		\support \left(\bx\right) &\triangleq \supp \left(\sparse \left(\Psi^T\bx\right)\right), \\[6pt]  \proj \left(\be,\bx\right) &\triangleq  \sum_{k \in \support(\bx)} \bpsi_k \bpsi_k^T\be.
	\end{align*} 
\item The {\it high SNR regime} is the operating region where the perturbation does not change the subspace that $\bx$ lies in:
\begin{equation}  \label{eq:Complete_recovery}
\textrm{High SNR:}\quad \support(\bx)=\support(\bx+\be).
\end{equation}
We characterize the conditions guaranteeing (\ref{eq:Complete_recovery}) in Prop.\ \ref{prop:SNR_condition}.
\end{itemize}\vspace{5pt}

\noindent If we operate at high SNR, the front end preserves the signal
and hence its output $\bxhat$ can be written as follows:
\begin{equation*}
\bxhat = \bx + \sum_{k \in \support(\bx)}\bpsi_k \bpsi_k^T \be = \bx +  \proj(\be,\bx).
\end{equation*}
Thus the effective perturbation is $\proj(\be,\bx)$, which lives in a lower dimensional space. Its impact is therefore significantly reduced.  In Sections \ref{sec:linear_classifier} and \ref{sec:neural_nets}, we quantify the reduction in adversarial impact via an ensemble-averaged analysis based on a stochastic model for the classifier $g(\cdot)$. 

\subsection{Characterizing the High SNR Regime} \label{sec:high_snr}

\noindent We can gain valuable design insight by characterizing the conditions that guarantee high SNR operation of the sparsifying front end, as stated in the following proposition (with proof in Appendix A): 

\begin{proposition} \label{prop:SNR_condition}
For sparsity level $K$ and perturbation $\be$ with $\Vert\be\Vert_\infty\leq\epsilon$, the sparsifying front end preserves the input coefficients if the following SNR condition holds:
\begin{equation*}
\mathrm{SNR} \, \triangleq \, \frac{\lambda}{\epsilon} \,> \gamma,
\end{equation*}
where $\lambda$ is the magnitude of the smallest non-zero entry of $\sparse(\Psi^T \bx)$ and $\gamma = 2\,\max_{k}{\left\Vert \bpsi_k\right\Vert_1}$.
\end{proposition}\vspace{1pt}


A direct consequence of the SNR condition is that we expect basis functions that are small in $\ell_1$ norm to be more effective. As we will see later in Section \ref{sec:linear_performance}, this is also a favorable criterion for performance in the white box attack scenario. Another important design parameter is the value of $K$, which must be chosen to optimize the following tradeoff: lower sparsity levels allow us to impose high SNR even for larger perturbations, but if the data is only approximately $K$-sparse, this results in unwanted signal perturbation. We find in our experiments that a choice of $K/N$ of the order of 1--5\% provides an excellent balance to this tradeoff. 

Our subsequent analysis assumes that the SNR condition holds, in which case we can quantify the reduction in adversarial impact solely via the effect of the front end on the perturbation. We start with a study of linear classifiers and then extend our results to neural networks via a locally linear model.


\section{Analysis for Linear Classifiers} \label{sec:linear_classifier}

Consider a linear classifier $g(\bx) = \bw^T \bx$.  We calculate the adversarial impact for various attack models and quantify the efficacy of our defense by using a stochastic model for $\bw$. 

\subsection{Impact of Adversarial Perturbation}

When the front end is not present, the impact of the attack is $\Delta=\bw^T \be$. By Holder inequality, we have 
\begin{equation} \label{holder_no_def}
\Delta=\bw^T \be \leq \Vert \be \Vert_\infty \Vert \bw \Vert_1 \leq \epsilon \,\Vert \bw \Vert_1 \triangleq \Delta_0,
\end{equation} 
where the second inequality follows from the $\ell_\infty$ attack budget constraint. We can observe that $\be_0=\epsilon \,\sign(\bw)$ achieves equality in (\ref{holder_no_def}), which means that $\be_0$ is the optimal attack when the adversary has knowledge of $\bw$. We use $\Delta_0 = \epsilon \Vert \bw \Vert_1$ as a baseline to assess the efficacy of our defense.

When the defense is present, the adversarial impact $\Delta$ becomes
\begin{equation}
\Delta = \left| \bw^T (\bxhat - \bx) \right| = \left| \bw^T \proj(\be,\bx) \right|
= \left| \be^T \proj(\bw,\bx) \right|
\end{equation}
where the second equality follows from the definition of $\proj(\be,\bx)$. 
We can now consider two scenarios depending on the adversary's knowledge of the defense and the classifier:

\begin{enumerate}[itemsep=5pt, topsep=5pt, label=\arabic*.,leftmargin=15pt]

\item\textit{Semi-white box scenario:} Here perturbations are designed based on the knowledge of $\bw$ alone, and therefore the attack remains $\be_{\mathrm{SW}}=\epsilon \sign \left(\bw\right)$. The output distortion becomes
\begin{equation} \label{eq:delta_sw}
\Delta_{\mathrm{SW}} = \epsilon\, \left| \sign(\bw^T) \proj(\bw,\bx) \right|.
\end{equation}
We note that the attack is aligned with $\bw$. 
\item\textit{White box scenario:} Here the adversary has the knowledge of both $\bw$ and the front end, and designs perturbations in order to maximize $\Delta=\left| \be^T \proj(\bw,\bx) \right|$.
We can use the same Holder inequality based argument as before to prove that the optimal perturbation is $\be_{\mathrm{W}}=\epsilon \sign \left(\proj \left(\bw,\bx \right)\right)$. 
The resulting output distortion can be written as
\begin{equation*}
\Delta_{\mathrm{W}}=\epsilon \, \left\Vert\proj (\bw,\bx)\right\Vert_1.
\end{equation*}
Thus, instead of being aligned with $\bw$, $\be_{\mathrm{W}}$ is aligned to the projection of $\bw$ onto the subspace that $\bx$ lies in. 
\end{enumerate} 

\subsection{Ensemble Averaged Performance} \label{sec:linear_performance}

We now quantify the gain in robustness conferred by the sparsifying front end by taking an ensemble average over randomly chosen classifiers $\bw$.

\begin{assumption}
We assume a random model for $\bw$, where the entries $\{ w_i\}_{i=1}^N$ are i.i.d.\ with zero mean and median. Let $\E\left[|w_1|\right] = \mu = \Theta(1)$ and $\E\left[w_1^2\right] = \sigma^2 = \Theta(1)$.
\end{assumption}

\noindent We observe that the baseline adversarial impact $\Delta_0 = \epsilon\, \Vert \bw\Vert_1 $ scales with $N$. This is formalized in the following proposition:
\begin{proposition}
$\Delta_0/N$ converges to $\epsilon\,\mu$ almost surely, i.e\
\begin{equation*}
\prob \left( \lim_{N\to\infty} \frac{\Delta_{0}}{N} = \epsilon\,\mu \right) = 1.
\end{equation*}
Thus, with no defense, the adversarial impact scales as $\Theta(N)$.
\end{proposition}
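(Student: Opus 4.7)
The proposal is to recognize the statement as a direct consequence of Kolmogorov's Strong Law of Large Numbers (SLLN). First I would rewrite the baseline impact explicitly as a sum of i.i.d.\ random variables:
\begin{equation*}
\frac{\Delta_0}{N} \;=\; \frac{\epsilon \left\Vert \bw \right\Vert_1}{N} \;=\; \epsilon \cdot \frac{1}{N}\sum_{i=1}^{N} |w_i|.
\end{equation*}
The random variables $\{|w_i|\}_{i=1}^{N}$ are i.i.d.\ (since the $w_i$ are), and by assumption have finite mean $\E[|w_1|] = \mu$. (In fact finite second moment $\sigma^2$ is assumed, which is more than enough.)

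Next I would invoke the SLLN to conclude that $\frac{1}{N}\sum_{i=1}^{N} |w_i| \to \mu$ almost surely as $N \to \infty$. Multiplying by the deterministic constant $\epsilon$ preserves almost sure convergence, yielding
\begin{equation*}
\prob\left( \lim_{N \to \infty} \frac{\Delta_0}{N} = \epsilon\,\mu \right) = 1,
\end{equation*}
which is the first claim.

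For the order-of-growth statement, since $\mu = \Theta(1)$ by assumption and $\epsilon$ is a fixed constant, $\epsilon \mu$ is a strictly positive constant. Almost sure convergence of $\Delta_0/N$ to this positive constant implies that for any $\delta > 0$ there exists (a.s.) some finite $N_0$ beyond which $\frac{1}{2}\epsilon\mu < \Delta_0/N < 2\epsilon\mu$, so we can take $C_1 = \frac{1}{2}\epsilon\mu$ and $C_2 = 2\epsilon\mu$ in the definition of $\Theta(\cdot)$, giving $\Delta_0 = \Theta(N)$ with probability one.

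There is no real obstacle here: the proof is essentially a one-line appeal to SLLN. The only thing worth being slightly careful about is distinguishing between the almost-sure statement and the ``with high probability'' scaling, but since a.s.\ convergence implies convergence in probability, no extra work is needed. If one wanted a non-asymptotic bound (e.g., a concentration inequality via Hoeffding or Chebyshev using $\sigma^2 = \Theta(1)$), that would be a trivial extension, but it is not required by the statement as given.
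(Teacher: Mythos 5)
Your proof is correct and follows essentially the same route as the paper: both write $\Delta_0/N$ as $\epsilon$ times the sample mean of the i.i.d.\ variables $|w_i|$ with finite mean $\mu$ and invoke the strong law of large numbers. Your additional remark making the $\Theta(N)$ scaling explicit is a harmless elaboration of what the paper leaves implicit.
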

\begin{proof}
$\Delta_0$ is the sum of i.i.d.\ random variables $\epsilon\, |w_i|$ with finite mean: $\E[ \epsilon\, |w_i|] = \epsilon\, \mu<\infty$. Hence we can apply the strong law of large numbers, which completes the proof.
\end{proof}

We now state the following theorems that characterize the performance of the sparsifying front end defense in the semi-white box and white box scenarios.

\vspace{5pt}
\noindent1. \textit{Semi-White Box Scenario:}

\begin{theorem} \label{theorem1}
As $K$ and $N$ approach infinity, $\Delta_{\mathrm{SW}}/K$ converges to $\epsilon\,\mu$ in probability,
i.e.\
\begin{equation*}
\lim_{K\to\infty} \prob \left( \left| \frac{\Delta_{\mathrm{SW}}}{K} - \epsilon\,\mu \right| \leq \delta \right) = 1 \quad \forall \; \delta >0.
\end{equation*}
Thus, the impact of adversarial perturbation in the case of semi-white box attack is attenuated by a factor of $K/N$ compared to having no defense.
\end{theorem}

\noindent The proof can be found in Appendix A.\vspace{3pt}

\vspace{5pt}
\noindent2. \textit{White Box Scenario:}

\vspace{5pt}
\noindent We begin with the following lemma that provides a useful upper bound on the impact of the white box attack.
\begin{lemma} \label{lemma:triangle}
An upper bound on the white box distortion $\Delta_{\mathrm{W}}$ is given by 
\begin{equation*}
\Delta_{\mathrm{W}} \leq \epsilon\, \sum_{k=1}^K \left| \bpsi_k^T \bw \right| \left\Vert \bpsi_k \right\Vert_1.
\end{equation*}
\end{lemma}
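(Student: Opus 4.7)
The bound is essentially a one-line triangle inequality applied to the explicit expansion of $\proj(\bw,\bx)$. Starting from the formula $\Delta_{\mathrm{W}} = \epsilon\,\|\proj(\bw,\bx)\|_1$ derived immediately above the lemma statement, I would first expand the projection using its definition,
\[
\proj(\bw,\bx) \;=\; \sum_{k \in \support(\bx)} \bpsi_k\,\bpsi_k^T \bw \;=\; \sum_{k \in \support(\bx)} \bigl(\bpsi_k^T \bw\bigr)\,\bpsi_k,
\]
so that the projection is written as a sum of $K$ scaled basis vectors (since $|\support(\bx)| = K$).

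Next I would apply the triangle inequality for the $\ell_1$ norm to this sum, and then use the homogeneity $\|\alpha\bv\|_1 = |\alpha|\,\|\bv\|_1$ to pull the scalar coefficient $\bpsi_k^T \bw$ out of each term:
\[
\|\proj(\bw,\bx)\|_1 \;\leq\; \sum_{k \in \support(\bx)} \bigl\|\bigl(\bpsi_k^T \bw\bigr)\,\bpsi_k\bigr\|_1 \;=\; \sum_{k \in \support(\bx)} \bigl|\bpsi_k^T \bw\bigr|\,\|\bpsi_k\|_1.
\]
Multiplying by $\epsilon$ and relabeling, without loss of generality, so that $\support(\bx) = \{1,\dots,K\}$, yields the claimed inequality.

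There is no real obstacle here: the argument uses nothing beyond the triangle inequality and scalar homogeneity of $\|\cdot\|_1$, and the orthonormality of $\Psi$ is not even needed. The only point worth flagging is that the bound is tight when the vectors $(\bpsi_k^T \bw)\,\bpsi_k$ all agree in sign coordinate-wise (so that no cancellation occurs inside the $\ell_1$ norm), and is otherwise loose. Such looseness is acceptable because the lemma is presumably invoked to produce an upper bound on $\E[\Delta_{\mathrm{W}}]$ that decouples the $k$-indexed terms and is therefore tractable under the stochastic model for $\bw$ introduced in Section~\ref{sec:linear_performance}.
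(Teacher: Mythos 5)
Your proof is correct and follows essentially the same route as the paper's: the paper writes the $\ell_1$ norm coordinate-wise and applies the scalar triangle inequality before swapping sums, which is just the component-level version of your vector-level triangle inequality plus homogeneity applied to $\proj(\bw,\bx)=\sum_{k\in\support(\bx)}\left(\bpsi_k^T\bw\right)\bpsi_k$. The only cosmetic difference is your tightness remark (the paper instead notes exactness when the supports of the selected basis functions do not overlap), and note the lemma is actually used in Theorem~\ref{theorem2} via a tail/union bound rather than through $\E[\Delta_{\mathrm{W}}]$.
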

\begin{proof}
\begin{align*}
\Delta_{\mathrm{W}} &= \epsilon \left\Vert\proj \left(\bw,\bx \right)\right\Vert_1 
= \epsilon \,\sum_{i=1}^N \, \left| \sum_{k=1}^K \left( \bpsi_k^T \bw \right) \psi_k\left[i\right] \right| 
\\&\leq \epsilon \,\sum_{i=1}^N \sum_{k=1}^K \left| \bpsi_k^T \bw \right| \left| \,\psi_k\left[i\right] \right|
= \epsilon \,\sum_{k=1}^K  \left| \bpsi_k^T \bw \right|  \left\Vert \bpsi_k \right\Vert_1.
\end{align*}
\end{proof}
\vspace{-5pt}
\noindent Note that this bound is exact if the supports of the $K$ selected basis functions do not overlap, and consequently the white box distortion cannot grow slower than $K$ since the bound has $K$ terms. However, if the $\ell_1$ norms of the basis functions do not scale too fast with $N$, we can show that the distortion scales as $\bigo\left(K \polylog(N)\right)$, as stated in the following theorem.

\begin{theorem} \label{theorem2}
Under the assumptions $\left\Vert \bpsi_k \right\Vert _1 = \bigo(\log N)$, $\left\Vert \bpsi_k \right\Vert _\infty = \littleo(1)$ $\,\forall\, k\in\{1,2,\dots,K\}$, and $\left\Vert \bw \right\Vert _\infty = \bigo(1)$, 
we have the following upper bound for $\Delta_\mathrm{W}$:
\begin{equation*}
\lim_{N \to \infty} \Pr\left(\Delta_\mathrm{W} \leq \bigo\left(K \polylog(N)\right)\right)=1.
\end{equation*}
Thus, the impact of adversarial perturbation in the case of white box attack is attenuated by a factor of $\bigo(K \polylog(N)/N)$ compared to having no defense.
\end{theorem}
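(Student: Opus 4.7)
The plan is to start from the deterministic bound in Lemma \ref{lemma:triangle},
\[
\Delta_{\mathrm{W}} \leq \epsilon \sum_{k=1}^K \left| \bpsi_k^T \bw \right| \left\Vert \bpsi_k \right\Vert_1,
\]
and absorb the $\bigo(\log N)$ factor from the hypothesis $\left\Vert \bpsi_k \right\Vert_1 = \bigo(\log N)$ out of the sum, obtaining
\[
\Delta_{\mathrm{W}} \leq \epsilon\, \bigo(\log N) \cdot K \cdot \max_{1\le k\le K}\left|\bpsi_k^T \bw\right|.
\]
This reduces the task to showing $\max_{1\le k\le K}\left|\bpsi_k^T \bw\right| = \bigo(\polylog(N))$ with probability tending to $1$, from which the desired $\bigo(K\polylog(N))$ bound on $\Delta_\mathrm{W}$ follows by multiplication.

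\textbf{Concentration step.} For each fixed $k$, the random variable $\bpsi_k^T \bw = \sum_{i=1}^N \psi_k[i]\, w_i$ is a sum of independent, zero-mean terms. Under the hypothesis $\left\Vert \bw \right\Vert_\infty = \bigo(1)$, each summand $\psi_k[i] w_i$ is bounded in magnitude by $C\,|\psi_k[i]|$ for some constant $C$, and since $\Psi$ is orthonormal we have $\sum_i \psi_k[i]^2 = \left\Vert \bpsi_k \right\Vert_2^2 = 1$. Hoeffding's inequality then yields a sub-Gaussian tail
\[
\prob\left( \left| \bpsi_k^T \bw \right| > t \right) \;\leq\; 2\exp\!\left(-\frac{t^2}{2C^2}\right).
\]
(The hypothesis $\left\Vert \bpsi_k \right\Vert_\infty = \littleo(1)$ is not strictly needed here, but it can be invoked with a Bernstein-type refinement if sharper tails are desired.) Setting $t = C'\sqrt{\log N}$ for $C'$ sufficiently large makes this probability at most $N^{-2}$, so a union bound over $k \in \{1,\dots,K\}$ (with $K \le N$) gives
\[
\prob\!\left( \max_{1\le k\le K}\left|\bpsi_k^T \bw\right| > C'\sqrt{\log N} \right) \leq \frac{2K}{N^2} \leq \frac{2}{N} \to 0.
\]

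\textbf{Assembly.} On the complement of this event, plugging back into the reduction yields $\Delta_\mathrm{W} \leq \epsilon \cdot \bigo(\log N) \cdot K \cdot C'\sqrt{\log N} = \bigo(K(\log N)^{3/2}) = \bigo(K\polylog(N))$, which is the stated bound. The main obstacle is the concentration step: with only second-moment control on $\bw$, a Chebyshev-style bound would give polynomial tails for each $|\bpsi_k^T\bw|$ and would not survive the union bound over $K$ terms while still leaving us with a $\polylog$ factor; it is precisely the added hypothesis $\left\Vert \bw \right\Vert_\infty = \bigo(1)$ together with $\left\Vert \bpsi_k \right\Vert_2 = 1$ that gives the sub-Gaussian tails needed to make the union bound succeed, and the $\left\Vert \bpsi_k \right\Vert_1 = \bigo(\log N)$ hypothesis that keeps the outer multiplicative factor polylogarithmic.
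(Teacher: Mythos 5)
Your proposal is correct, and it reaches the stated bound by a genuinely different concentration argument than the paper. Both proofs start from Lemma \ref{lemma:triangle} and finish with a union bound over the $K$ retained basis vectors, but the paper controls the tails of $\bpsi_k^T\bw$ via Lemma \ref{lemma:CLT}: a Lindeberg central limit theorem showing $\bpsi_k^T\bw \to \mathcal{N}(0,\sigma^2)$ (this is where the hypothesis $\left\Vert \bpsi_k \right\Vert_\infty = \littleo(1)$ is used), after which it writes the tail probabilities as Gaussian $Q$-function values and chooses the threshold $\delta = \bigo(K\polylog(N))$ so that $2KQ(\cdot) \to 0$. You instead bypass the CLT entirely and use Hoeffding's inequality, exploiting $\left\Vert \bw \right\Vert_\infty = \bigo(1)$ together with $\left\Vert \bpsi_k \right\Vert_2 = 1$ to get a non-asymptotic sub-Gaussian tail for each $\bpsi_k^T\bw$, then take a max and a union bound. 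Your route is arguably tighter as a piece of rigor: the paper's step equating finite-$N$ tail probabilities with $2Q(\cdot)$ is only an asymptotic approximation licensed by convergence in distribution, whereas Hoeffding gives a valid bound at every $N$, and it also yields the explicit exponent $\bigo\left(K(\log N)^{3/2}\right)$. The trade-off is that your argument leans on almost-sure boundedness of the weights (a strict reading of $\left\Vert \bw \right\Vert_\infty = \bigo(1)$, which the theorem does assume), while the paper's CLT route uses that assumption only inside the Lindeberg condition and additionally consumes the $\left\Vert \bpsi_k \right\Vert_\infty = \littleo(1)$ hypothesis, which, as you correctly note, your argument does not need.
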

\begin{proof} \label{white_proof}
We first state the following convergence result (see Appendix A for the proof):
\begin{lemma} \label{lemma:CLT}
$\bpsi_k^T \bw \rightarrow \mathcal{N}(0,\,\sigma^2)$ in distribution.
\end{lemma}

\noindent We can use Lemmas \ref{lemma:triangle} and \ref{lemma:CLT} to obtain
\begin{align*}
\Pr&\left(\Delta_\mathrm{W} > \delta\right) \leq \Pr \left( \epsilon\,\sum_{k=1}^K \left|\bpsi_k^T \bw\right|\, \left\Vert\bpsi_k\right\Vert_1 > \delta \right) 
\\ &\leq \Pr \left( \bigcup_{k=1}^K \left\{\left|\bpsi_k^T \bw\right| > \frac{\delta}{\epsilon\,K \left\Vert\bpsi_k\right\Vert_1} \right\} \right) 
\\ &\leq \sum_{k=1}^K \Pr \left( \left|\bpsi_k^T \bw\right| > \frac{\delta}{\epsilon\,K \left\Vert\bpsi_k\right\Vert_1} \right) 
\\ &= \sum_{k=1}^K 2 Q\left(\frac{\delta}{\epsilon\,\sigma K \left\Vert\bpsi_k\right\Vert_1}\right)
= 2K Q\left(\frac{\delta}{\epsilon\,\sigma} \bigo\left(\frac{1}{K \log N}\right)\right),
\end{align*}
where the last step follows from the $\ell_1$ assumption on $\bpsi_k$, and $Q(x)$ is the Gaussian tail distribution function $\int_{x}^{\infty}  e^{-t^2/2}dt \,/\sqrt{2 \pi}$. We complete the proof by setting $\delta = \bigo(K \polylog(N))$ which makes the right-hand side of the above equation vanish as $N$ approaches infinity.
\end{proof}

\noindent 
A practical take-away from  the above theoretical results is that, in order for the defense to be effective against a white box attack, not only do we need input sparsity ($K \ll N$), but we also need that the individual basis functions be localized (small in $\ell_1$ norm). The latter implies, for example, that sparsification with respect to a wavelet basis, which has more localized basis functions, should be more effective than with a DCT basis.


\section{Analysis for Neural Networks} \label{sec:neural_nets}

In this section, we build on the preceeding insights for neural networks by exploiting locally linear approximations. For simplicity we start with a 2-layer, fully connected network trained for binary classification, and then extend our results to a general network for multi-class classification.

\subsection{Locally Linear Representation}

Consider first binary classification using a neural network with one hidden layer with $M$ neurons, as depicted in Fig. \ref{fig:2-layer-nn}. Since ReLU units are piecewise linear, switching between slopes of 0 and 1, they can be represented using input-dependent switches. Given input $\bx$, we denote by $s_i(\bx) \in\{0,1\}$ the switch corresponding to the $i$th ReLU unit. Now the activations of the hidden layer neurons can be written as follows:
\begin{equation*}
a_i =  \mathrm{ReLU}\left(\bw_i^T \bx - b_i \right) = s_i(\bx)\, \bw_i^T \bx - s_i(\bx)\, b_i.
\end{equation*}
The output of the neural network can be written as
\begin{align*}
y(\bx) = \bw_0^T \ba &= \sum_{i=1}^M s_i(\bx)\, w_0[i] \,\bw_i^T \bx - \sum_{i=1}^M s_i(\bx)\, w_0[i] \, b_i 
\\
&
 = \bweq(\bx)^T \bx - \, \beq(\bx).
\end{align*}
where 
\begin{align*} 
\bweq(\bx) 
= \sum_{i=1}^M \, s_i(\bx) \, w_0[i] \,\bw_i, 
\quad
\beq(\bx) 
= \sum_{i=1}^M s_i(\bx) \, w_0[i] \, b_i.
\end{align*}

This locally linear model extends to any standard neural network, since convolutions and subsampling are inherently linear and max-pooling units can also be modeled as switches.  For more than 2 classes, we will apply this modeling approach to the ``transfer function'' from the input to the inputs to the softmax layer, as discussed
in Section \ref{sec:multiclass}.


\begin{figure}[t]
\centering
  	\begin{tikzpicture}[
	rect_h/.style = {rectangle, draw=black!100, fill=cyan!18, thin, minimum height=7.5mm, minimum width=11mm},
	rect_v/.style = {rectangle, draw=black!100, fill=cyan!18, thin, minimum height=12mm, minimum width=7mm},
	rect_v_big/.style = {rectangle, draw=black!100, fill=cyan!18, thin, minimum height=12mm, minimum width=7mm},
	circ/.style = {circle, draw=black!100, fill=cyan!18, thin, minimum size=7mm},
	outer/.style = {rounded corners=0.2cm, draw=black!100, dotted, thick, inner sep = 1.5mm},
	rect/.style = {rectangle, draw=black!100, fill=cyan!18, thin, minimum height=7mm, minimum width=7mm},
	]
	\node[] (x) {$\bx$};
	\node[] (conn) [right = 6mm of x] {};
	\node[rect_v] (w2) [above right = 8mm of conn, align = center] {$\bw_2$\\[5pt]$b_2$};	
	\node[rect_v] (w1) [above = 5mm of w2, align = center] {$\bw_1$\\[5pt]$b_1$};
	\node[rect_v] (wm) [below = 17mm of w2, align = center] {$\bw_\subM$\\[5pt]$b_\subM$};
	\node[] (dots) [above = 5mm of wm] {$\vdots$};
	\node[circ, label=below:\scriptsize{ReLU}] (sig1) [right = 5mm of w1] {\splot{relu}};
	\node[circ] (sig2) [right = 5mm of w2] {\splot{relu}};
	\node[circ] (sigm) [right = 5mm of wm] {\splot{relu}};	
	\node[] (dots2) [right = 8mm of dots] {$\vdots$};
	\node[rect] (w0) [right = 15mm of dots2, align = center] {$\bw_0$};
	\node[circ, label=below:\scriptsize{Sigmoid}] (sig0) [right = 10mm of w0] {\small{\splot{sigmoid}}};
	\node[] (out) [right = 4mm of sig0] {};
	\begin{scope}[decoration={markings,
    mark=at position 0.5 with {\arrow{Latex[width=1.2mm,length=1.8mm]}}}] 
    \draw[postaction={decorate}] (x.north east) -- (w1.west);
    \draw[postaction={decorate}] (x.east) -- (w2.west); 
    \draw[postaction={decorate}] (x.south east) -- (wm.west);
    \draw[postaction={decorate}] (sig1.east) -- node[anchor=220]{$a_1$} (w0.160);
    \draw[postaction={decorate}] (sig2.east) -- node[anchor=60]{$a_2$} (w0.180); 
    \draw[postaction={decorate}] (sigm.east) -- node[anchor=140]{$a_\subM$} (w0.210);    
	\end{scope}
	\begin{scope}[decoration={markings,
    mark=at position 0.7 with {\arrow{Latex[width=1.2mm,length=1.8mm]}}}] 
	\draw[postaction={decorate}] (w0) -- node[anchor=270, xshift=1.5pt]{$y(\bx)$} (sig0);
	\end{scope}
	\draw[->] (w1) -- (sig1);
	\draw[->] (w2) -- (sig2);
	\draw[->] (wm) -- (sigm);
	\draw[->] (sig0) -- (out);
	\node[outer, fit = (w1) (sig1) (wm) (sigm), label=below:\small{Hidden layer}] (n0) {};
	\end{tikzpicture}
	\captionof{figure}{Two layer neural network for binary classification. ReLU units are piecewise linear, hence the network is locally linear: $y(\bx) = \bweq(\bx)^T\,\bx - \beq(\bx)$.}
	\label{fig:2-layer-nn}
\end{figure}
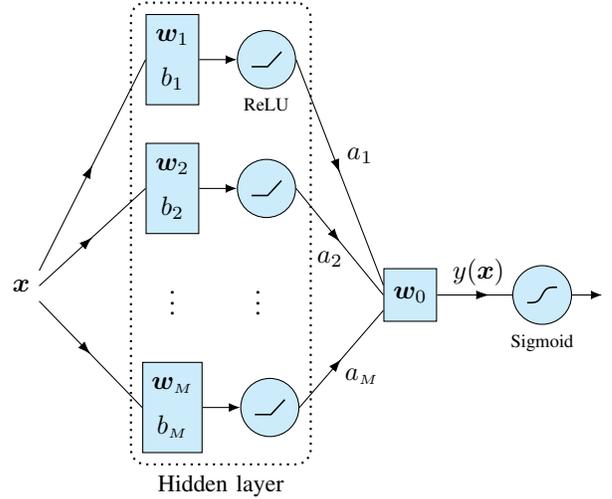

\subsection{Impact of Adversarial Perturbation} \label{subsec:snr}

Now we consider the effect of an $\ell_\infty$-bounded perturbation $\be$ on the performance of the network. For ease of notation, we write $\bweq = \bweq(\bx)$, $\bweqbar = \bweq(\bx+\be)$, and $\beqbar = \beq(\bx + \be ) $. The distortion due to the attack can be written as
\begin{align} \nonumber 
\Delta &= y(\bx + \be) - y(\bx) \\ \nonumber
& = \bweqbar^T (\bx + \be) - \beqbar - \bweq^T \bx + \beq \\ \label{eq:dist}
& =  \bweqbar^T \be + \left[\left(\bweqbar -\bweq \right)^T \bx  - \left(\beqbar - \beq\right)\right] 
\end{align}
We observe that the distortion can be split into two terms: (i) $\bweqbar^T \be$ that is identical to the distortion for a linear classifier, and can be analyzed within the theoretical framework of 
Section \ref{sec:linear_classifier}
 and (ii) $\left(\bweqbar -\bweq \right)^T \bx  - \left(\beqbar - \beq\right)$, that is determined by the ReLU units that flip due to the perturbation. 

In the next section, we provide an analytical characterization of a ``high SNR'' regime 
in which the number of flipped switches is small,
motivated by iterative attacks which gradually build up attack strength over a large number of iterations
(with a per-iteration $\ell_{\infty}$-budget of $\delta \ll \epsilon$).  
When very few switches flip, the distortion is dominated by the first term in \eqref{eq:dist}, and we can apply our prior results on linear classifiers to infer the efficacy of the sparsifying front end in attenuating the distortion. As we discuss via our numerical results, this creates a situation in which it might sometimes be better (depending on dataset and attack budget) for the adversary to try to make the most of network's nonlinearity, spending the attack budget in one go trying to flip a larger number of switches in order to try to maximize the impact of the second term in \eqref{eq:dist}.  

\subsection{Characterizing the High SNR Regime}

We now investigate the conditions that guarantee high SNR at neuron $i$, i.e.\ $\sbar_i = s_i$, where $\sbar_i$ denotes the switch when the adversary is present. 

We observe that 
\begin{align*} 
\sbar_i = \begin{cases}  1-s_i, & \bw_i^T\bx -b_i \, \in \big[\min\big(-\bw_i^T\be,0\big), \,\\
& \phantom{ \bw_i^T\bx -b_i \, \in\big[} \max\big(-\bw_i^T\be,0\big) \big] \\[5pt]
\phantom{-}s_i, & \bw_i^T\bx -b_i \, \notin \big[\min\big(-\bw_i^T\be,0\big), \,\\&\phantom{ \bw_i^T\bx -b_i \, \in\big[}
\max\big(-\bw_i^T\be,0\big) \big],
\end{cases}
\end{align*}
This implies the following sufficient condition for high SNR at neuron $i$:
$\left| \bw^T_i \bx - b_i  \right|  > \left| \bw^T_i \be \right|$.

\vspace{5pt}
\begin{assumptions}\ 
To establish our theoretical result, we make a few mild technical assumptions: 
\begin{enumerate}[label=\arabic*.,leftmargin=15pt, itemsep=2pt, topsep=4pt]

\item The data is normalized in $\ell_2$-norm and bounded: $\left\Vert\bx\right\Vert_2 = 1$ and $\left \Vert \bx \right \Vert_\infty = \littleo(1)$.

\item The $\ell_\infty$ budget $\delta \leq  \left| b_i \right|/ \left\Vert \bw_i \right\Vert_1 - C \;\; \forall \, i = 1, \dots, M$ and for some $C = \Theta(1) > 0$. Note that this assumption is justified in an iterative/optimization-based attack, where the adversary gradually spends the budget over many iterations.

\item The number of neurons $M=\omega(1)$ as $N$ gets large. 

\item For each neuron $i=1, \dots, M$, we model the $\{w_i[k], \, k=1,\dots,N\}$ as i.i.d, with zero mean $\E\left[w_i[k] \right]=0$.
We assume that $\E\left[w_i[k]^2\right] = \sigma_i^2 = \Theta(1)$. 

\end{enumerate}
\end{assumptions}

\vspace{5pt}
\begin{theorem}\label{thm3}
With high probability, the high SNR condition ($\bsbar=\bs$) holds for $1-\littleo(1)$ fraction of neurons, i.e.
\begin{equation*}
\lim_{N \to \infty} \Pr \left(\frac{\left| S\right|}{M} = 1 - \littleo(1) \right) = 1,
\end{equation*}
where $S = \left\{ i: \left| {\bw^T_i \bx - b_i } \right| > \left|{\bw^T_i \be} \right| \right\}$.
\end{theorem}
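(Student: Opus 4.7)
My plan is to derive a purely intrinsic sufficient condition for high SNR at each neuron (one that does not depend on the adversary's choice of $\be$), show it fails with probability $\bigo(1/N)$ per neuron uniformly, and then aggregate by a linearity-of-expectation plus Markov argument that requires no independence across neurons.

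First I would eliminate $\be$ from the analysis. By Holder and Assumption 2, $|\bw_i^T\be| \leq \delta\|\bw_i\|_1 \leq |b_i| - C\|\bw_i\|_1$. Combining with the triangle inequality yields
\begin{equation*}
|\bw_i^T\bx - b_i| - |\bw_i^T\be| \;\geq\; |b_i| - |\bw_i^T\bx| - \delta\|\bw_i\|_1 \;\geq\; C\|\bw_i\|_1 - |\bw_i^T\bx|.
\end{equation*}
Hence the sufficient condition $\left| \bw_i^T\bx - b_i \right| > \left| \bw_i^T\be \right|$ already identified in the excerpt is implied by the purely intrinsic event $|\bw_i^T\bx| < C\|\bw_i\|_1$. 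Crucially this bound is worst-case in $\be$, so it covers white-box and iterative adversaries who may adapt $\be$ to the weights.

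Next I would bound the probability of this intrinsic event over the random weights of Assumption 4. Let $\mu_i := \E[|w_i[k]|] = \Theta(1)$. Two Chebyshev bounds suffice: first, a WLLN-style estimate on $\sum_k |w_i[k]|$ (whose variance is at most $\sigma_i^2 = \Theta(1)$) gives $\Pr(\|\bw_i\|_1 < N\mu_i/2) = \bigo(1/N)$; second, using $\|\bx\|_2 = 1$ we have $\vari(\bw_i^T\bx) = \sigma_i^2\|\bx\|_2^2 = \sigma_i^2 = \Theta(1)$, so $\Pr(|\bw_i^T\bx| \geq CN\mu_i/2) = \bigo(1/N^2)$. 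A union bound then yields
\begin{equation*}
p_i \;:=\; \Pr\big(\text{neuron } i \text{ violates high SNR}\big) \;=\; \bigo(1/N),
\end{equation*}
uniformly in $i$.

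Finally, with $X_i = \mathbbm{1}[\text{neuron } i \text{ violates high SNR}]$ and $|S^c| = \sum_{i=1}^M X_i$, linearity of expectation gives $\E[|S^c|/M] = M^{-1}\sum_i p_i = \bigo(1/N)$, and Markov with threshold $\eta_N = N^{-1/2}$ yields $\Pr(|S^c|/M > \eta_N) = \bigo(N^{-1/2}) \to 0$. Since $\eta_N \to 0$, this is exactly $|S|/M = 1 - \littleo(1)$ with probability tending to one. The main obstacle is securing uniformity of $p_i$ across $i$: this comes for free because (a) the Holder bound on $|\bw_i^T\be|$ is worst-case in $\be$, decoupling the adversary's choice from the weights, and (b) the per-neuron parameters $\mu_i,\sigma_i^2$ are both $\Theta(1)$. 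Note that independence of weights \emph{across} neurons is never invoked --- Markov's inequality only needs control of the expected count, not concentration of the count.
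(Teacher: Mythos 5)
Your proof is correct, but it proceeds by a genuinely different route than the paper's. The paper also reduces to the worst-case bound $\left|\bw_i^T\be\right| \leq \delta\left\Vert\bw_i\right\Vert_1$ via Holder, but then invokes a Lindeberg central limit theorem (using $\left\Vert\bx\right\Vert_\infty = \littleo(1)$) to treat $\bw_i^T\bx - b_i$ as approximately $\mathcal{N}(-b_i,\sigma_i^2)$, lower-bounds the per-neuron success probability by a $Q$-function evaluated at $\Theta(N)\left(\delta - |b_i|/\left\Vert\bw_i\right\Vert_1\right) \leq -C\,\Theta(N)$, and finishes with a union bound over all $M$ neurons. Your argument replaces the CLT with two non-asymptotic Chebyshev bounds (on $\left\Vert\bw_i\right\Vert_1$ and on $\bw_i^T\bx$, the latter needing only $\left\Vert\bx\right\Vert_2=1$) to get a uniform per-neuron failure probability $\bigo(1/N)$, and replaces the union bound with a first-moment/Markov argument on the fraction of flipped neurons. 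This buys several things: it is matched to the actual claim (a $1-\littleo(1)$ \emph{fraction} of neurons, rather than all neurons simultaneously), so you never need the per-neuron failure probability to be $\littleo(1/M)$ for an $M=\omega(1)$ that has no upper bound; it avoids the delicate point that convergence in distribution does not by itself justify evaluating Gaussian tails at thresholds growing like $\Theta(N)$ (the paper's equality with $Q$-functions is an approximation that would need Berry--Esseen-type control to be fully rigorous); it drops the $\left\Vert\bx\right\Vert_\infty=\littleo(1)$ assumption; and, as you note, it needs no independence across neurons. The paper's route, if made quantitative, yields the stronger conclusion that \emph{all} neurons satisfy high SNR, whereas yours yields the stated fractional claim with elementary tools. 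One small caveat: your argument uses $\mu_i = \E\left[\left|w_i[k]\right|\right] = \Theta(1)$ uniformly in $i$, which is not literally part of Assumption 4; this is benign and is implicitly used by the paper as well (its $\left\Vert\bw_i\right\Vert_1 = \Theta(N)$ step), but it is worth flagging as an added hypothesis.
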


\begin{proof}
We first state the following lemma (with proof in Appendix A):
\begin{lemma}\ \label{lemma:clt}
$\bw^T_i \bx \to \mathcal{N}(0,\sigma_i^2)$ in distribution.
\end{lemma}\vspace{3pt}


\noindent Noting that $\bw^T_i \bx - b_i \to \mathcal{N}(-b_i,\sigma_i^2)$, we can now write
\begin{align*}
\Pr &\left( \left| {\bw^T_i \bx - b_i }\right| > \left|{\bw^T_i \be} \right| \right)
\geq \Pr \left( \left| {\bw^T_i \bx - b_i}\right| > \delta \left\Vert\bw_i\right\Vert_1 \right) 
\\ &= \Pr \left( {\bw^T_i \bx - b_i } > \delta \left\Vert\bw_i\right\Vert_1 \right) + \Pr \left( {\bw^T_i \bx - b_i} < - \delta \left\Vert\bw_i\right\Vert_1 \right) \\
&= Q \left( \frac{\delta \left\Vert\bw_i\right\Vert_1 + b_i}{\sigma_i} \right)+Q \left( \frac{\delta \left\Vert\bw_i\right\Vert_1 - b_i}{\sigma_i} \right) \\
&\geq Q \left( \frac{\delta \left\Vert\bw_i\right\Vert_1 - \left|b_i\right|}{\sigma_i} \right)
=Q \left( \frac{\left\Vert\bw_i\right\Vert_1}{\sigma_i} \left(\delta - \frac{|b_i|}{\left\Vert \bw_i\right\Vert_1}\right) \right)\\
& = Q \left( \Theta(N) \, \left(\delta - \frac{|b_i|}{\left\Vert \bw_i\right\Vert_1}\right) \right) \to 1 \quad \text{as} \quad N\to \infty,
\end{align*}
where $Q(x) = \frac{1}{\sqrt{2\pi}} \int_x^\infty e^{-t^2/2} dt$ and  $\delta < \frac{|b_i|}{\left\Vert \bw_i\right\Vert_1}$ by Assumption 2. 
The theorem follows by using a union bound over $i=1,\dots,M$.
\end{proof}
 
\subsection{Attacks} \label{sec:attacks}

We assume that the adversary knows the true label, a reasonable (mildly pessimistic) assumption given the high accuracy of modern neural networks.
We consider attacks that focus on maximizing the first term in \eqref{eq:dist}, using a high SNR approximation to the distortion:
\begin{equation*} \label{eq:distortion}
\Delta = y(\bxhat) - y(\bx) = \bweq^T \proj\left( \be,\bx \right) = \be^T \proj\left( \bweq,\bx \right),
\end{equation*}
Here $\bweq \approx \bweq (\bx ) = \sum_{i=1}^M \, s_i \, w_0[i] \,\bw_i$ if we are applying a small perturbation to the input data.  However,
for iterative attacks with multiple small perturbations, $\bweq$ would evolve across iterations.

We can now define attacks in analogy with those for linear classifiers.  The adversary can use an ``effective input''
$\bx_1$ to compute the locally linear model $\bweq = \bweq (\bx_1 )$.  
For example, the adversary may choose $\bx_1 = \bx$ if making a small perturbation,
or may iterate computation of its perturbation using $\bx_1 = \bx + \be$.
The adversary can also use a possibly different ``effective input'' $\bx_2$ to estimate the set of basis coefficients retained by
the sparse front ends.  Armed with this notation, we can define two attacks:

\vspace{2pt} \textit{{Semi-white box:}}$\quad A_\mathrm{SW} ( \bx_1 , \epsilon )  = \epsilon \sign(\bweq (\bx_1 ))$,

\vspace{2pt} \textit{{White box:}}$\phantom{aa}\quad A_\mathrm{W} (\bx_1, \bx_2 , \epsilon) =  \epsilon \sign \left( \proj\left( \bweq (\bx_1 ) ,\bx_2 \right) \right)$. \vspace{2pt}

\noindent We make no claims on the optimality of these attacks. They are simply sensible strategies based on the locally linear model, and as we show in the next section, they are more powerful than existing FGSM attacks for multiclass classification. 

For simplicity, we set $\bx_1 = \bx_2$ for the white box attack, and simplify notation by denoting it by $A_\mathrm{W} (\bx_1 , \epsilon)$.  
A (suboptimal) default choice is to set $\bx_1 = \bx_2 =  \bx$, relying on a high SNR approximation for both the network switches 
and for the sparsifying front end.  However, we can also refine these choices iteratively, as follows.

\vspace{5pt}
\noindent {\bf Iterative versions:} We choose a particularly simple approach, in which we use a small attack budget $\delta$ to change $\bweq$ by small amounts
and update the ``direction'' of the attack, maintaining the overall $\ell_{\infty}$ constraint at each stage:
\begin{gather*}
\be_{k+1}  = \be_k + A(\bx + \be_k, \delta) \\
\be_{k+1}  = \mathrm{clip}_{{\epsilon}}(\be_{k+1}),
\end{gather*}
where $\mathrm{clip}_{{\epsilon}}(\be) \triangleq \max(\min(\be,\epsilon),-\epsilon)$. We believe there is room for improvement in how we iterate, but this particular choice suffices to illustrate the power of locally linear modeling. 

\begin{remark}
The Fast Gradient Sign Method (FGSM) attack puts its attack budget along the gradient of the cost
function $J (\cdot )$ used to train the network.
For binary classification and the cross-entropy cost function, we can show that it is equivalent to the semi-white box attack with $\bx_1 = \bx$.
Specifically, we can show that 
\begin{align*}
\be_\mathrm{FGSM} &= \epsilon \sign\left(\nabla_\bx J(\bx, l) \right) = A_\mathrm{SW} ( \bx , \epsilon ) 
\end{align*}
where $l$ is the true label, by verifying that the gradient is proportional to $\bweq ( \bx )$.
For a larger number of classes, however, insights from our locally linear modeling can be used to devise more powerful
attacks than FGSM.

\end{remark}

\subsection{Multiclass Classification} \label{sec:multiclass}

\begin{figure}[t]
  \centering
  	\begin{tikzpicture}[
	rect_h/.style = {rectangle, draw=black!100, fill=cyan!18, thin, minimum height=7.5mm, minimum width=11mm},
	rect_v/.style = {rectangle, draw=black!100, fill=cyan!18, thin, minimum height=12mm, minimum width=7mm},
	rect_v_big/.style = {rectangle, draw=black!100, fill=cyan!18, thin, minimum height=62mm, minimum width=7mm},
	rect_v_big_2/.style = {rectangle, draw=black!100, fill=cyan!18, thin, minimum height=62mm, minimum width=9mm},
	circ/.style = {circle, draw=black!100, fill=cyan!18, thin, minimum size=7mm},
	outer/.style = {rounded corners=0.2cm, draw=black!100, dotted, thick, inner sep = 2.5mm},
	rect/.style = {rectangle, draw=black!100, fill=cyan!18, thin, minimum height=7mm, minimum width=7mm},
	rect_smaller/.style = {rectangle, draw=black!100, fill=cyan!18, thin, minimum height=8mm, minimum width=8mm},
	scale=0.95, every node/.style={scale=0.95}
	]
	\node[] (x) {$\bx$};
	\node[] (conn) [right = 6mm of x] {};
	\node[rect_smaller] (w2) [above right = 8mm of conn, align = center] {$\bw^{\{2\}}_{\mathrm{eq}}$ \\[5pt]$b^{\{2\}}_{\mathrm{eq}}$};	
	\node[rect_smaller] (w1) [above = 5mm of w2, align = center] {$\bw^{\{1\}}_{\mathrm{eq}}$\\[5pt] $b^{\{1\}}_{\mathrm{eq}}$};
	\node[rect_smaller] (wm) [below = 17mm of w2, align = center] {$\bw^{\{L\}}_{\mathrm{eq}}$\\[5pt] $b^{\{L\}}_{\mathrm{eq}}$};
	\node[] (dots) [above = 5mm of wm] {$\vdots$};
	\node[] (dots_hidden) [above = 3.8mm of dots] {};
	\node[rect_v_big_2] (softmax) [right = 15mm of dots_hidden, align = center] {Soft\\max};
	\node[] (w1_hidden) [right = 11.4mm of w1] {};
	\node[] (w2_hidden) [right = 11.4mm of w2] {};
	\node[] (wm_hidden) [right = 11mm of wm] {};
	\node[] (y1_hidden) [right = 6.5mm of w1_hidden] {};
	\node[] (y2_hidden) [right = 6.5mm of w2_hidden] {};
	\node[] (ym_hidden) [right = 6.5mm of wm_hidden] {};
	\node[] (out1) [right = 3mm of y1_hidden] {$p_1$};
	\node[] (out2) [right = 3mm of y2_hidden] {$p_2$};
	\node[] (outm) [right = 3mm of ym_hidden] {$p_{\scriptscriptstyle{L}}$};
	\begin{scope}[decoration={markings,
    mark=at position 0.5 with {\arrow{Latex[width=1.2mm,length=1.8mm]}}}] 
    \draw[postaction={decorate}] (x.north east) -- (w1.west);
    \draw[postaction={decorate}] (x.east) -- (w2.west); 
    \draw[postaction={decorate}] (x.south east) -- (wm.west);
    \draw[postaction={decorate}] (w1.east) -- node[anchor=240]{$y_1$} (w1_hidden.west);
    \draw[postaction={decorate}] (w2.east) -- node[anchor=240]{$y_2$} (w2_hidden.west); 
    \draw[postaction={decorate}] (wm.east) -- node[anchor=240]{$y_{\scriptscriptstyle{L}}$} (wm_hidden.west);    
	\end{scope}
	\begin{scope}[decoration={markings,
    mark=at position 0.7 with {\arrow{Latex[width=1.2mm,length=1.8mm]}}}] 
	\end{scope}
	\draw[->] (y1_hidden.west) -- node[anchor=240]{} (out1.west); 
    \draw[->] (y2_hidden.west) -- node[anchor=240]{} (out2.west); 
    \draw[->] (ym_hidden.west) -- node[anchor=240]{} (outm.west); 
	\node[outer, fit = (w1) (wm), label=below:\small{\begin{tabular}{l}Equivalent hidden layer\end{tabular}}] (n0) {};
	\end{tikzpicture}
	\captionof{figure}{Multilayer (deep) neural network with $L$ classes. Each of the $L$ pre-softmax outputs (logits) is locally linear: $y_i = \bweqi^T\,\bx - \beqi, \;i=1,\dots,L$.}
	\label{fig:general-net}
\end{figure}
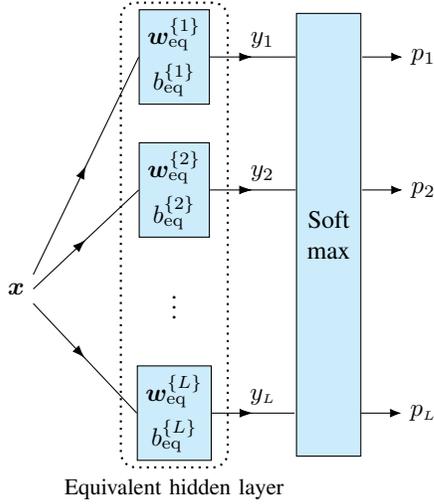

In this subsection, we consider a multilayer (deep) network with $L$ classes. Each of the outputs of the network can be modeled using the analysis in the previous section as follows:
\begin{equation*}
y_i = \bweqi^T\bx -\beqi, \; \; i=1,\dots,L,
\end{equation*}
where $\by=[y_1,y_2,..., y_L]^T$, $p_i=S_i(\by)$, and  the softmax function
$S_i(\by)={e^{y_i}}/\left({\sum_{j=1}^L e^{y_j}}\right).
$
Assume that $\bx$ belongs to class $t$ (with label $t$ known to the adversary). 
\vspace{5pt}

\noindent \textbf{Locally linear attack:} The adversary can sidestep the nonlinearity of the softmax layer, since its goal is simply to make $y_i > y_t$ for {\it some} $i \neq t$.  Thus, the adversary can consider
$L-1$ binary classification problems, and solve for perturbations aiming to maximize $y_i - y_t$ for each $i \neq t$.  
We now apply the semi-white and white box attacks, and their iterative versions, to each pair, with $\bweq = \bweqi - \bweqt$
being the equivalent locally linear model from the input to $y_i - y_t$. After computing the distortions for
each pair, the adversary applies its attack budget to the {\it worst-case} pair for which the distortion is the largest:
\vspace{-1pt}
\begin{equation*}
\max_{i,\be} \quad y_i(\bx+\be)-y_t(\bx+\be), \qquad \text{s.t.} \quad \left\Vert \be \right\Vert_{\infty} \leq \epsilon
\end{equation*}

\noindent \textbf{FGSM:} Unfortunately, the FGSM attack does not have a clean interpretation in the multiclass setting.  Taking the gradient of
the cross-entropy betwen one-hot encoded vector of the true label $\bl$ ($l[k]=\delta_{tk}$) and the final output of the model $\bp=[p_1,p_2,...,p_L]$, with
$J(\bl,\bp)=- \sum_{i=1}^{L} l_i \log{(p_i)} 
$, we obtain 
\vspace{-5pt}
\begin{equation*}
\be_{\mathrm{FGSM}} = \epsilon \sign \bigg( \bweqt (p_t-1)
+ \sum_{
k \neq t} \bweqk p_k \bigg).
\end{equation*}
This does not take the most direct approach to corrupting the desired label, unlike
our locally linear attack, and is expected to perform worse.


\begin{figure}[!t]
\centering
\subfigure[A natural image and its adversarial counterpart (misclassified as ``7'').]{\label{fig:image_adv}\includegraphics[width=0.95\columnwidth]{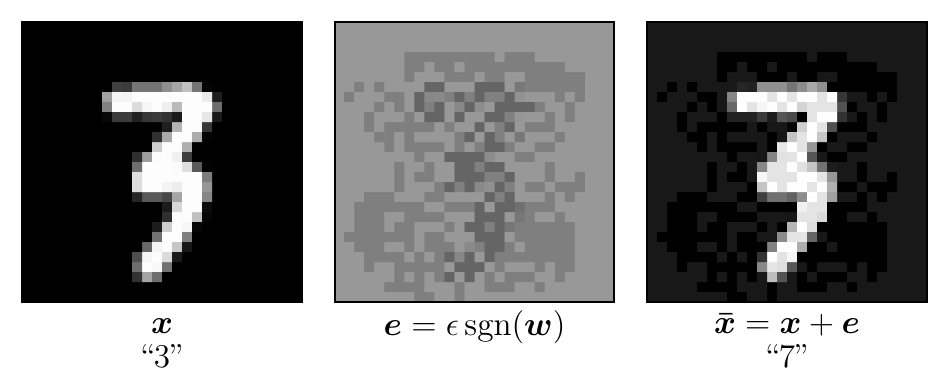}}
\subfigure[After sparsification, the perturbed image is no longer adversarial.]{\label{fig:image_adv_sp}\includegraphics[width=0.95\columnwidth]{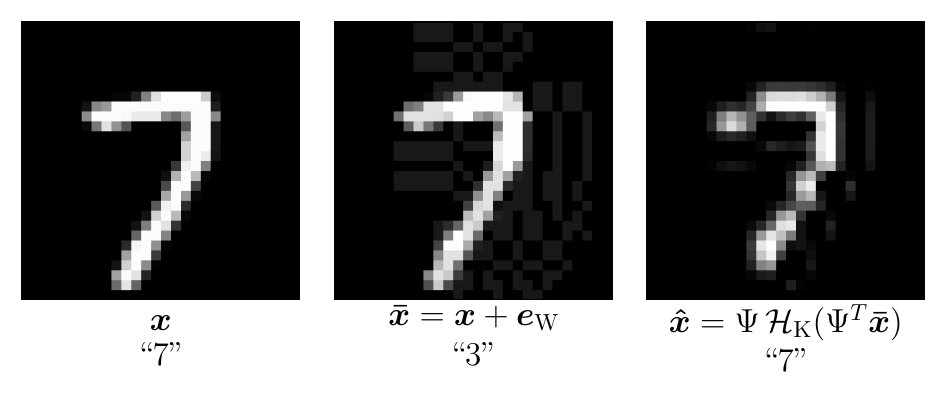}}
\caption{Sample images depicting the interplay between attack and defense: tiny adversarial attacks can fool a classifier, but sparsity-based preprocessing can restore accuracy by projecting the attack down to a lower dimensional subspace.}
\vskip -0.5em
\end{figure}

\begin{table}[!b]
\centering
\caption{Binary classification accuracies for linear SVM, with $\epsilon=0.1$ for attacks and $\rho = 2\%$ for defense.}
\label{table:binary}
\begin{center}
\begin{small}
\begin{tabular}{@{}lcc@{}}
\toprule
 & No defense & 
\multicolumn{1}{c}{\begin{tabular}[l]{@{}c@{}}Sparsifying \\ front end\end{tabular}}\\ \midrule
Semi-white box attack & 0.25 & 98.37 \\
White box attack & 0.25 & 95.37 \\
\bottomrule                    
\end{tabular}
\end{small}
\end{center}
\vskip -0.5em
\end{table}

\begin{figure}[!t]
\centering
\subfigure[Accuracy vs.\ sparsity level, where the attacks use $\epsilon=0.1$.]{\label{fig:acc_vs_sp}\includegraphics[width=0.99\columnwidth]{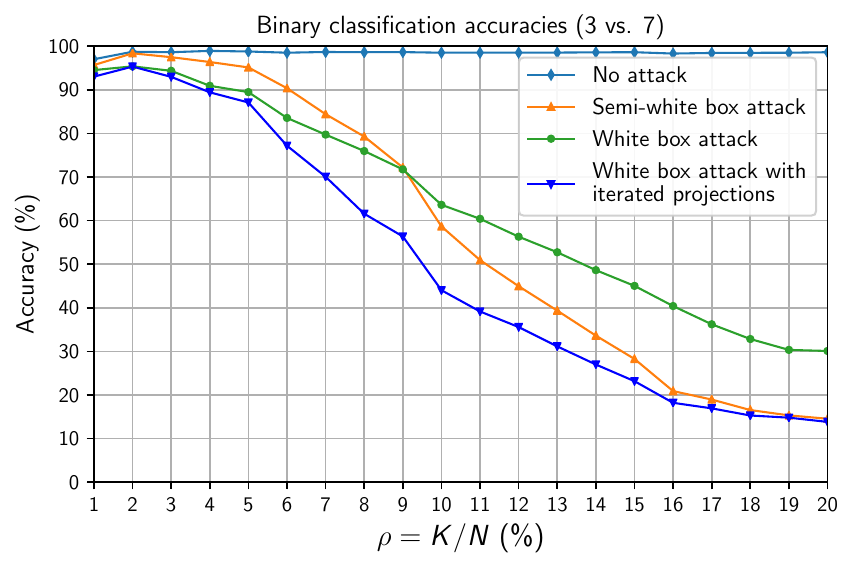}}
\par\smallskip
\subfigure[Accuracy vs.\ attack budget, where the defense uses $\rho\,$=$\,$2\%.]{\label{fig:acc_vs_eps}\includegraphics[width=0.99\columnwidth]{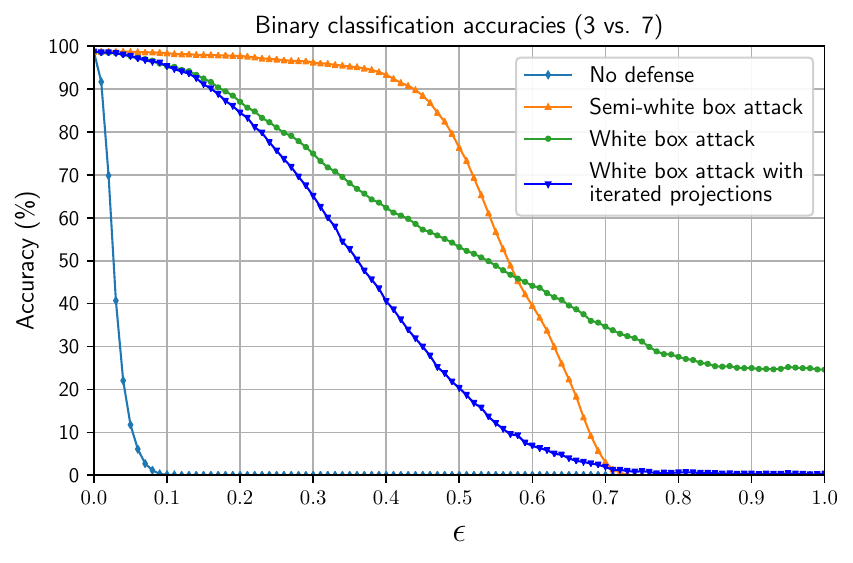}}
\caption{Binary classification accuracies for the linear SVM as a function of the sparsity level $\rho$ and attack budget $\epsilon$.}
\label{fig_binary_svm}
\vskip -0.5em
\end{figure}

\section{Experimental Results} \label{sec:exp}

In this section we demonstrate the effectiveness of sparsifying front ends on inference tasks on the MNIST \cite{lecun1998gradient} and CIFAR-10 \cite{krizhevsky2009learning} datasets. Code is available at \href{https://github.com/soorya19/sparsity-based-defenses/}{https://github.com/soorya19/sparsity-based-defenses/}.
We first provide results for binary linear classifiers (``3'' versus ``7''), and then report on experiments with neural networks, for both binary
and multi-class classification.  

\subsection{Linear Classifiers} \label{exp_linear}

\vspace{2pt}
\noindent\textbf{Set-up:} Here we train a linear SVM $g(\bx) = \bw^T\bx \,+\, b$ to classify digit pairs $d_1$ and $d_2$ from the MNIST dataset. The ``direction'' of the attack is opposite that of the correct class: if the SVM predicts class $d_1$ when $g(\bx) < 0$ and $d_2$ when $g(\bx) > 0$, the perturbation is of the form $\bxbar = \bx + \epsilon \sign \left(\bw\right)$ for images of class $d_1$, and $\bxbar = \bx -\epsilon \sign \left(\bw\right)$ for class $d_2$. We assume that the adversary has access to the true labels.
For the defense, we use the Daubechies-5 wavelet \cite{daubechies1992ten} to perform sparsification and retrain the SVM with sparsified images (for various sparsity levels) before evaluating performance. 

\vspace{5pt}
\noindent\textbf{Results:} We consider the case of 3 versus 7 classification. When no defense is present, an attack\footnote{The reported values of $\epsilon$ correspond to images normalized to $[0,1].$} with $\epsilon = 0.1$ renders the classifier useless, with accuracy dropping from 98.64\% to 0.25\% as depicted in Fig. \ref{fig:image_adv}.

Fig. \ref{fig:image_adv_sp} illustrates the sparsifying front end at work, showing an image, its perturbed version and the effect of sparsification. Insertion of the front end greatly improves resilience to adversarial attacks: as shown in Table \ref{table:binary}, accuracy is restored to near-baseline levels at low sparsity levels. As discussed in Section \ref{sec:high_snr}, the choice of sparsity level $\rho$ must optimize the tradeoff between attack attenuation and unwanted signal distortion. We find that a value of $\rho$ between 1--5\% works well for all digit pairs, with $\rho = 2$\% being the optimal choice for the 3 vs.\ 7 scenario. 

Fig. \ref{fig_binary_svm} reports on accuracy as a function of sparsity level $\rho$ and attack budget $\epsilon$.  
At the small values of $\rho$ and $\epsilon$ that we are interested in, the white box attack causes more damage than the semi-white box attack. At larger $\rho$ and $\epsilon$, it performs poorer than the semi-white box attack: the high SNR condition in Proposition \ref{prop:SNR_condition} is no longer satisfied, hence the white box attack is attacking the ``wrong subspace''. 
It is easy to devise iterative white box attacks that do better by refining the estimate of the $K$-dimensional subspace in the following manner:
\begin{equation*}
{\be}^{[i+1]} = {\be}^{[i]} + \delta \sign \left(\proj \left({\bw},\,\bx + {\be}^{[i]} \right)\right),
\end{equation*}
where $\delta \leq \epsilon$ and ${\be}^{[0]} = 0$. 
Essentially, we refine our estimate of $\support(\bx+\be)$ at each iteration by calculating the projection of $\bw$ onto the top $K$ basis vectors of $\bx + \be$ (rather than just $\bx$). We can observe from the figures that the attack with iterated projections performs better in the low SNR region.
However, this scenario is not of practical interest, since front ends with large $\rho$ do not provide enough attenuation of the adversarial perturbation, and perturbations with large $\epsilon$ are no longer visually imperceptible.

\subsection{Neural Networks} \label{exp_nn}

\begin{table}[!t]
\vspace{7pt}
\centering
\caption{Multiclass MNIST classification accuracies for 4-layer CNN, with $\epsilon=0.2$ for attacks and $\rho = 3.5\%$ for defense.}
\vskip -0.5em
\label{table_multiclass-mnist}
\begin{center}
\begin{small}
\begin{tabular}{lcc}
\toprule & 
\multicolumn{1}{c}{\begin{tabular}[l]{@{}c@{}}No defense\end{tabular}} & 
\multicolumn{1}{c}{\begin{tabular}[l]{@{}c@{}}Sparsifying \\ front end\end{tabular}} \\
\midrule
No attack & 99.31 & 98.97 \\
Iterative locally linear attack & 7.36 & 74.38
\\
Iterative FGSM & 6.34 & 74.97
 \\ 
Momentum iterative FGSM & 6.99 & 73.55
 \\
PGD (100 random restarts) & 5.12 & 61.04 \\
\bottomrule
\end{tabular}
\end{small}
\end{center}
\vskip -1.2em
\end{table}

\begin{figure}[!t]
\centering
\includegraphics[width=0.99\columnwidth]{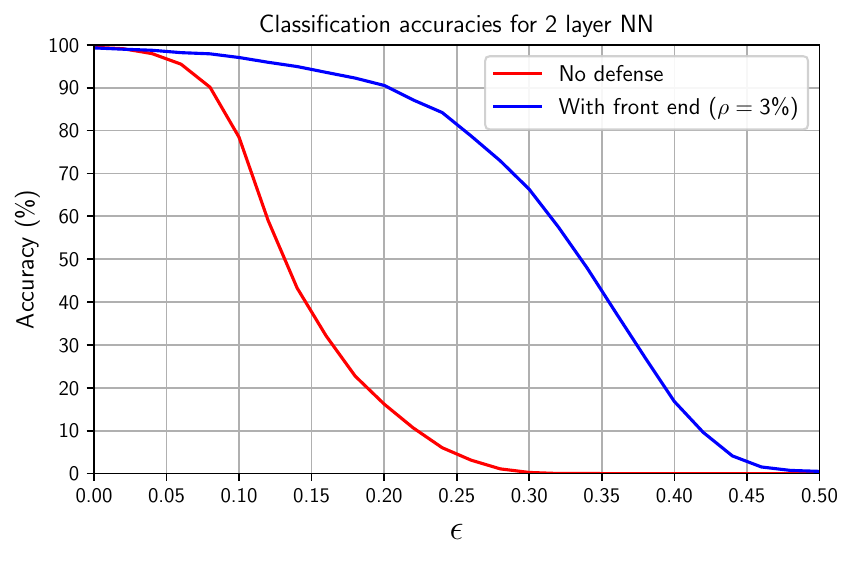}
\caption{Binary classification accuracies for 2-layer NN as a function of $\epsilon$, using a 1000-step iterative locally linear attack.}
\label{fig_binary_nn}
\vskip -1em
\end{figure}

For neural networks, we perturb images with the following attacks in the white box setting: 
\begin{enumerate*}[(a)]
\item the locally linear attack,
\item its iterative version,
\item FGSM \citep{goodfellow2014adversarial},
\item iterative FGSM \citep{kurakin2016scale},
\item projected gradient descent (PGD) \citep{madry2017towards},
and
\item momentum iterative FGSM \citep{dong2017discovering} (winner of the NIPS 2017 competition on adversarial attacks and defenses \citep{kurakin2018adversarial}).
\end{enumerate*}
For PGD, we use multiple random restarts and calculate accuracy over the most successful restart(s) for each image.
We evaluate three versions of the attacks: one that uses the backward pass differential approximation (BPDA) technique of \cite{obfuscated-gradients} to approximate the gradient of the front end as $\bm{1}$, a second version where the gradient is calculated as the projection onto the top $K$ basis vectors of the input, and a third version where we iteratively refine the projection as described in the previous section. We report accuracies for the version that causes the most damage.

\vspace{5pt}
\noindent\textbf{Set-up:} For binary classification of MNIST digits ``3'' and ``7'', we use a 2-layer fully-connected network with 10 neurons. For multi-class MNIST, we use a 4-layer CNN consisting of two convolutional layers (with 20 and 40 feature maps, each with 5x5 filters) and two fully-connected layers (containing 1000 neurons each, with dropout) \citep{deeplearning}. 
For CIFAR-10, we use a 32-layer ResNet \cite{he2016deep} and follow the data augmentation strategy of \cite{he2016deep} for training.
For the sparsifying front end, we use the Daubechies-5 wavelet for binary MNIST, Coiflet-1 for multiclass MNIST and Symlet-9 for CIFAR-10,
and retrain the networks with sparsified images. 

\vspace{5pt}
\noindent\textbf{MNIST results:} Figure \ref{fig_binary_nn} reports on binary classification accuracies for the 2-layer NN as a function of attack budget, showing that the front end improves robustness across a range of $\epsilon$.
As shown in Section \ref{sec:attacks}, FGSM is identical to the locally linear attack for binary classification, so we do not label it in the figure. We note that images are normalized to the range $[0,1]$, so by the end of the chosen range of $\epsilon$, perturbations are no longer visually imperceptible. 

Table \ref{table_multiclass-mnist} reports on multiclass classification accuracies for the 4-layer CNN, with attacks using an $\ell_\infty$ budget of $\epsilon=0.2$. Iterative attacks are run for 1000 steps with a per-iteration budget of $\delta=0.01$, except for PGD and the iterative locally linear attack which use 100 steps of $\delta=0.05$.
Without any defense, a strong adversary can significantly degrade performance, reducing accuracy from 99.31\% to 5.12\%. In contrast, when a sparsifying front end is present (with sparsity level $\rho=3.5\%$), the network robustness measurably improves, increasing accuracy to 61.04\% in the worst-case scenario. We note that the locally linear attack is stronger than FGSM (with results in Appendix B), and the iterative locally linear attack is competitive with single runs of the other iterative attacks.
\begin{table}[!t]
\centering
\begin{center}
\begin{small}
\caption{\small Comparison to other defenses on MNIST. For defenses with certified bounds, numbers in blue denote lower bounds on adversarial accuracy. Numbers in black denote PGD attack accuracy with 100 steps and 100 random restarts. }
\label{table:mnist_comparison}
\begin{tabularx}{0.98\columnwidth}{@{\hspace{6pt}}cl@{\hspace{20pt}}l@{\hspace{10pt}}c} 
\toprule
\begin{tabular}{@{}c}Attack \\budget 
\end{tabular}& Defense & 
\begin{tabular}{@{\hspace{-3pt}}l}
Adversarial \\ accuracy \end{tabular} &
\begin{tabular}{l}
Natural \\ accuracy\end{tabular} \\ 
\midrule&&\\[-8pt]
$\epsilon = 0.1$ 
& Sparsifying front end & 92.18 & 98.97\\
& Madry et al & 95.83 & 99.54 \\
& {Raghunathan et al} & 88.00, {\color{DarkBlue} 65.00} & 95.72\\
& {Wong et al} &  97.33, {\color{DarkBlue} 96.33}  & 98.81 \\[1.5pt]
\midrule&&\\[-8pt]
$\epsilon = 0.2$ 
& Sparsifying front end & 61.04 & 98.97\\
& Madry et al & 92.95 & 99.20\\[1.5pt]
 \midrule&&\\[-8pt]
 $\epsilon = 0.3$ 
 & Sparsifying front end & 13.15 &  98.97\\
 & Madry et al & 90.57 & 99.15 \\
 & {Wong et al}  &  86.21, {\color{DarkBlue} 54.34}  & 88.84 \\[1.5pt]
\bottomrule
\end{tabularx}
\end{small}
\end{center}
\vskip -1.5em
\end{table}

Tables \ref{table:mnist_comparison} compares the front end defense with the state-of-the-art empirical defense of \citet{madry2017towards} and the provable defenses of \citet{raghunathan2018certified} and \citet{wong2018scaling}. We run the adversarial training defense on our network architecture and report on accuracies obtained. For \cite{raghunathan2018certified} and \cite{wong2018scaling}, we use publicly available pretrained models and report both empirical and certified performance. 
When the attack budget is small ($\epsilon=0.1$), the sparsifying front end is competitive with existing defenses, with better accuracies than the provable defense of \cite{raghunathan2018certified}. At the higher budget of $\epsilon=0.2$, the adversarial training defense of Madry et al provides better performance. The front end still offers a measurable degree of robustness, with the advantage that it is computationally much less expensive than adversarial training, which requires one to retrain the network with adversarial images, resulting in an $M$-fold slowdown if we use $M$ steps for the attack \cite{madry2017towards}.
For $\epsilon=0.3$, the performance of the sparsifying front end drops to 13.15\%.  This is not entirely surprising, since our theoretical guarantees
apply for a high SNR regime corresponding to relatively small perturbations.  We believe this is because sparse projections alone, especially with an off-the-shelf basis, are not ``nonlinear enough'' to discriminate between desired signal and perturbation for large attack budgets.

We have also performed experiments that combine the front end with adversarial training, with the result that accuracy improves from 61.04\% (when using the front end alone) to 80.07\% at $\epsilon=0.2$. We note that there is no performance increase compared to pure adversarial training if the same value of $\epsilon$ is used for both training and testing. However, sparsification seems to provide some robustness to mismatch between training and test conditions;
in particular,  if we test at a value of $\epsilon$ larger than what the network was trained for.
For example, accuracy increases from 35.65\% (using adversarial training alone) to 69.04\% when we train at $\epsilon=0.1$ and test at $\epsilon=0.2$.  These results are obtained using a 100-step PGD attack with 100 random restarts; in the training phase, we use a 40-step PGD attack as in \cite{madry2017towards}.

\vspace{5pt}
\noindent\textbf{CIFAR-10 results:} 
Table \ref{table:cifar} reports on CIFAR-10 accuracies with a 32-layer ResNet. Front end sparsification is performed after converting color images to the ($Y$, $C_b$, $C_r$) decorrelated color space and then projecting to the wavelet basis. Since human vision is relatively less sensitive to chrominance, we impose higher sparsity in the $C_b$ and $C_r$ axes. 

When using the front end alone, we can improve robustness from 11.32\% to 39.60\% at sparsity levels of (3.5, 1, 1)\%. However, there is a drop in natural accuracy due to the high level of sparsity imposed. We can significantly improve both metrics by combining the front end defense, using less drastic sparsity levels, with adversarial training.
Specifically, a front end with sparsity levels (50, 25, 25)\%, together with adversarial training using 5 steps of PGD, results in 67.21\% adversarial and 88.08\% natural accuracies, respectively. 
We note that the accuracy under attack is 0.4\% better than using adversarial training alone.

\begin{table}[!t]
\centering
\begin{center}
\begin{small}
\caption{\small CIFAR-10 accuracies for ResNet-32 at $\epsilon=2/255$. Defenses are tested with a 1000-step PGD attack.}
\label{table:cifar}
\begin{tabular}{lcc} 
\toprule
Defense & 
\begin{tabular}{l}
Adversarial \\ accuracy \end{tabular} &
\begin{tabular}{l}
Natural \\ accuracy\end{tabular} \\ 
\midrule
No defense & 11.32 & 91.11 \\
Sparsifying front end & 39.60 & 62.30\\
Adversarial training & 66.82 & 88.63 \\
Front end + adv. training & 67.21 & 88.02\\
\bottomrule
\end{tabular}
\end{small}
\end{center}
\vskip -1.5em
\end{table}


\section{Conclusions}

Our results make the case that sparsity is a crucial tool for limiting the impact of adversarial attacks on neural networks. We have also shown that a ``locally linear'' model for the network, an implicit premise behind state-of-the-art iterative attacks,  provides key design insights, both for devising and combating adversarial perturbations. Our proposed sparsifying front end makes an implicit assumption on the generative model for the data that we believe must hold quite generally for high-dimensional data, in order to evade the curse of dimensionality. 
We believe that these results are the first steps towards establishing a comprehensive design framework, firmly grounded in theoretical fundamentals, for robust neural networks, that is complementary to alternative defenses based on modifying the manner in which networks are trained.

While many state-of-the-art defenses are based on modifying the optimization involved in training the overall neural network, ours is
a bottom-up approach to robustness which is potentially more amenable to interpretation, and to theoretical guarantees based on a 
statistical characterization of the input.  However, much further work is required in order to realize this potential.
First, developing sparse generative models matched to various datasets is required for design of sparsifying front ends.  Even for the simple MNIST dataset considered here, 
the orthogonal wavelet basis considered here is only a first guess, and we believe that it can be improved upon by learning from data, and by use of overcomplete bases.  
Second, our placeholder scheme of picking the largest $K$ coefficients could be improved by devising computationally efficient and data-adaptive techniques for enforcing sparsity.  Lastly, while our work highlights the role of sparse projections in attenuating perturbations,
our theoretical framework applies to a high SNR regime corresponding to relatively small attack budgets. In practice,
the accuracy with our defense deteriorates for large attack budgets.  Thus, sparse projections alone are not enough to provide robustness 
against large adversarial perturbations, and additional ideas are needed to construct a bottom-up approach that is competitive with the
current state-of-the-art defense based on adversarial training of the entire network.  

While we have restricted attention to simple datasets and small networks in this paper in order to develop insight, 
the design of larger (deeper) networks for more complex datasets is our ultimate objective.  Our preliminary results for
CIFAR-10 with a 32-layer ResNet are encouraging, showing that some level of sparsification, along with adversarial training,
yields slightly better accuracy under attack than adversarial training alone. However, detailed insight into the front end and
network {\it structure} required to attain robustness remains a wide open research problem.


\section*{Acknowledgments}

This work was supported in part by the National Science Foundation under grants CNS-1518812, CCF-1755808 and CCF-1909320, by Systems on Nanoscale Information fabriCs (SONIC), one of the six SRC STARnet Centers, sponsored by MARCO and DARPA, and by the UC Office of the President under grant No. LFR-18-548175.

{\footnotesize
\bibliographystyle{IEEEtranN}
\bibliography{main.bbl}
}

\begin{appendices}

\section{Proofs of Theorems} \label{appendix:theory}
\vspace{5pt}

\subsection{High SNR Regime of the Sparsifying Front End}
\vspace{3pt}

\begin{proposition} \label{prop:SNR_condition}
For sparsity level $K$ and perturbation $\be$ with $\Vert\be\Vert_\infty\leq\epsilon$, the sparsifying front end preserves the input coefficients if the following SNR condition holds:
\begin{equation*}
\mathrm{SNR} \, \triangleq \, \frac{\lambda}{\epsilon} \,> \gamma,
\end{equation*}
where $\lambda$ is the magnitude of the smallest non-zero entry of $\sparse(\Psi^T \bx)$ and $\gamma = 2\,\max_{k}{\left\Vert \bpsi_k\right\Vert_1}$.
\end{proposition}\vspace{1pt}

\begin{proof}
By Holder inequality, the SNR condition implies that
\begin{equation*}
\lambda > \, \epsilon\, \gamma \geq 2\, \max_{k}{ \left| \bpsi_k^T\be \right|} \geq \left| \bpsi_i^T\be \right| + \left| \bpsi_j^T\be\right| \;\; \forall\, i, j.
\end{equation*}
In particular, we can choose $i$ and $j$ such that
\begin{equation*}
\min_{i\in\support(\bx)} \left( \left|\bpsi^T_{i}\bx \right| - \left| \bpsi_i^T\be \right| \right) > \max_{j\notin\support(\bx)} { \left| \bpsi_j^T\be \right|}
\end{equation*}
where we have used the fact that $\lambda=\min_{i\in\support(\bx)} \left|\bpsi^T_{i}\bx\right|$. We can now use the triangle inequality to get
\begin{equation*}
\min_{k\in\support(\bx)}\left|\bpsi^T_{k} \left( \bx + \be \right)\right| > \max_{j\notin\support(\bx)}{ \left| \bpsi_j^T\be \right|}.
\end{equation*}
It is easy to see that this is equivalent to $\support(\bx+\be) = \support(\bx)$, which completes the proof.
\end{proof}\vspace{5pt}

\subsection{Linear Classifiers: Semi-White Box Scenario}
\vspace{3pt}

\begin{theorem} \label{theorem1}
As $K$ and $N$ approach infinity, $\Delta_{\mathrm{SW}}/K$ converges to $\epsilon\,\mu$ in probability,
i.e.\
\begin{equation*}
\lim_{K\to\infty} \prob \left( \left| \frac{\Delta_{\mathrm{SW}}}{K} - \epsilon\,\mu \right| \leq \delta \right) = 1 \quad \forall \; \delta >0.
\end{equation*}
Thus, the impact of adversarial perturbation in the case of semi-white box attack is attenuated by a factor of $K/N$ compared to having no defense.
\end{theorem}

\begin{proof}
Let us assume without loss of generality that $\support(\bx) = \{1,\dots,K\}$. We can rewrite the adversarial impact (\ref{eq:delta_sw}) as 
\begin{equation*}
\Delta_{\mathrm{SW}}= \epsilon \, \left|Z_K\right|, \; \textrm{where} \;\; Z_K = \sum_{i=1}^K \sign(\bw)^T \bpsi_k \bpsi_k^T \bw.
\end{equation*}
The following lemma provides an upper bound on the mean and variance of $Z_K$.
\begin{lemma} \label{var_covar}
The mean and variance of $Z_K$ are bounded by linear functions of $K$.
\begin{equation*}
\E[Z_K] = K \mu, \quad \vari (Z_K) \leq K \left(\sigma^2 + \mu^2\right). 
\end{equation*}
\end{lemma}
\begin{proof}
We can write $Z_K = \sum_{i=1}^K U_i V_i$, where
\begin{equation*}
U_i = \sum_{m=1}^N \psi_i \left[m\right] \, w_m,\;
V_i = \sum_{m=1}^N \psi_i \left[m\right] \, \sign(w_m).
\end{equation*}
We observe that for $i,j \in \{1,\dots,K\}$, $\E\left[U_i V_i\right] = \mu$, and
\begin{align*}
\vari\left(U_i V_i\right) &= \sigma^2 + \mu^2 -2 \mu^2 \sum_{m=1}^N \psi_i^4 \left[m\right],  \\
\cov\left(U_i V_i,U_j V_j\right) &= -2\mu^2 \sum_{m=1}^N \psi_i^2\left[m\right] \, \psi_j^2\left[m\right], \quad i \neq j.
\end{align*}
Hence we get $\E[Z_K] = K \mu$, and
\begin{align*}
\vari(Z_K) &= \sum_{i=1}^K \vari\left(U_i V_i\right) - \sum_{\substack{i,j=1 \\ i \neq j}}^K \cov\left(U_i V_i,U_j V_j\right) \\
&= K \left(\sigma^2 + \mu^2\right) -2\mu^2 \sum_{m=1}^N \, \sum_{i,j=1}^K \psi_i^2\left[m\right] \, \psi_j^2\left[m\right] \\
&\leq K \left(\sigma^2 + \mu^2\right).
\end{align*}
\end{proof}
\noindent We can now apply Chebyshev's inequality to $Y_K = {Z_K / K}$, using the bounds in the lemma to obtain
\begin{align}\label{eq:prob}
\prob \left(\left|Y_K-\mu\right| \leq \delta \right) \geq 1 - \frac{1}{K} \left(\frac{\sigma^2+\mu^2} {\delta^2}\right) \quad \forall \; \delta \geq 0.
\end{align}
Note that $\left|\Delta_{\mathrm{SW}}/K - \epsilon\,\mu\right| = \epsilon\,\left||Y_K| - \mu\right| \leq \epsilon\,\left|Y_K - \mu\right|$. The statement of the theorem follows by \eqref{eq:prob} and letting ${K\to\infty}$ in the above inequality.
\end{proof}

\subsection{Linear Classifiers: White Box Scenario}
\vspace{3pt}

\begin{lemma} \label{lemma:CLT}
$\bpsi_k^T \bw \rightarrow \mathcal{N}(0,\,\sigma^2)$ in distribution.
\end{lemma}
\begin{proof} \label{CLT_proof}
We show that we can apply Lindeberg's version of the central limit theorem, noting that $\bpsi_k^T\bw = \sum_{i=1}^N Y_i$, where $Y_i = \psi_k \left[i \right] \,w_i$ are independent random variables with $\E[Y_i]=0$ and $\vari (Y_i)=\sigma_i^2$, with $\sum_{i=1}^N \sigma_i^2=\sigma^2$.

Now, given $\delta>0$, we investigate the following quantity in order to check Lindeberg's condition:
\begin{equation*}
L(\delta,N) = \frac{1}{\sigma^2} \sum_{i=1}^N \E \left[ Y_i^2  \mathbbm{1}_{\left\{|Y_i|> \delta \sigma \right\}} \right].
\end{equation*}
From the $\ell_{\infty}$ assumptions on $\bpsi_k$ and $\bw$, we observe that
\begin{align*}
 \E \left[ \psi_k^2\left[i \right] w_i^2 \mathbbm{1}_{\left\{|Y_i|> \delta \sigma \right\}}\right] &\leq \littleo^2(1)\bigo^2(1) \Pr{\left(|Y_i|> \delta \sigma \right)} \\
  &= \littleo^2(1)\bigo^2(1) \Pr{\left(|w_i|> \frac{\delta \sigma}{\littleo(1)} \right)}.
 \end{align*}
Also note that $\forall\, \delta>0, \; \exists\, M$ s.t. $\forall N>M$, $|w_i |< \delta \sigma/\littleo(1)$ $\forall \,i \in \{1,\dots,N\}$. Hence we get $\lim_{N \to \infty} L(\delta,N) =0$, which is Lindeberg's condition. 
\end{proof}

\subsection{Neural Networks: High SNR Condition}
\vspace{3pt}

\begin{lemma}\ \label{lemma:clt}
$\bw^T_i \bx \to \mathcal{N}(0,\sigma_i^2)$ in distribution.
\end{lemma} 
\vspace{-5pt}
\begin{proof}
We show that we can apply Lindeberg's version of the CLT, noting that $\bw^T_i \bx = \sum_{j=1}^N U_j$ is the sum of independent random variables, where $U_j = x_j\, w_i\left[j\right]$ with $\E\left[U_j\right] = 0$, $\vari \left(U_j\right) = \sigma_i^2 x_j^2$ and $\sum_{j=1}^N \sigma_i^2 x_j^2 = \sigma_i^2$.

\noindent Now given a constant $c_1 = \Theta(1) >0$, we investigate the following quantity in order to check the Lindeberg condition:
\begin{equation*}
L(c_1,N) = \frac{1}{\sigma_i^2} \sum_{j=1}^N \E \left[ U_j^2  \mathbbm{1}_{\left\{|U_j|> c_1 \sigma_i \right\}} \right]
\end{equation*}

\noindent From the assumptions on $\bw_i$ and $\bx$, we observe that
\begin{align*}
 \E \left[ x_j^2 \, w_i^2\left[j \right] \,\mathbbm{1}_{\left\{ |U_j|> \delta \sigma_i \right\}} \right] 
 &
 \leq \littleo^2(1) \Theta(1) \Pr \left(|U_j|> c_1 \sigma_i \right) 
 \\ &
 = \littleo^2(1) \Theta(1) \Pr \left( \left|w_i\left[j \right]\right| > \frac{c_1 \sigma_i}{\littleo(1)} \right)
\end{align*}
The $\ell_\infty$ assumption on $\bw_i$ also implies that $\forall\, c_1>0, \; \exists\, N_0$ s.t. $|w_i[j] |< c_1 \sigma_i/\littleo(1), \,\;\forall \,j \in \{1,\dots,N\},\, N>N_0$. Hence we obtain that $\lim_{N \to \infty} L(\delta,N) =0$, which verifies the Lindeberg condition.
\end{proof}

\section{Additional Empirical Results} \label{appendix:exp} 
\vspace{5pt}

\subsection{SNR of sparsifying front end}

Figure \ref{fig_snr_svm} reports on the overlap in the $K$-dimensional supports of $\bx$ and $\bx+\be$ for attacks on linear SVM and CNN. We can observe that the SNR condition in Section \ref{sec:high_snr} is approximately satisfied for most of the images. 

Figure \ref{fig_svm_images} depicts perturbed images with various support overlaps for the SVM, showing an example in each scenario where the defense succeeds and another where the defense fails. For the fraction of images with low SNR, the adversary can cause significant image distortion by shifting the support of the $K$ selected basis functions, but however such distortions do not necessarily lead to misclassification. As we can observe from the histograms, the distribution of SNRs for images where the defense fails is almost identical to those for which it succeeds.

\begin{figure*}[]
\centering
\subfigure[Support overlap for white box attack on linear SVM with $\epsilon=0.1$ and $K=15$.]{\label{fig_snr_svm:a}\includegraphics[width=1.75\columnwidth]{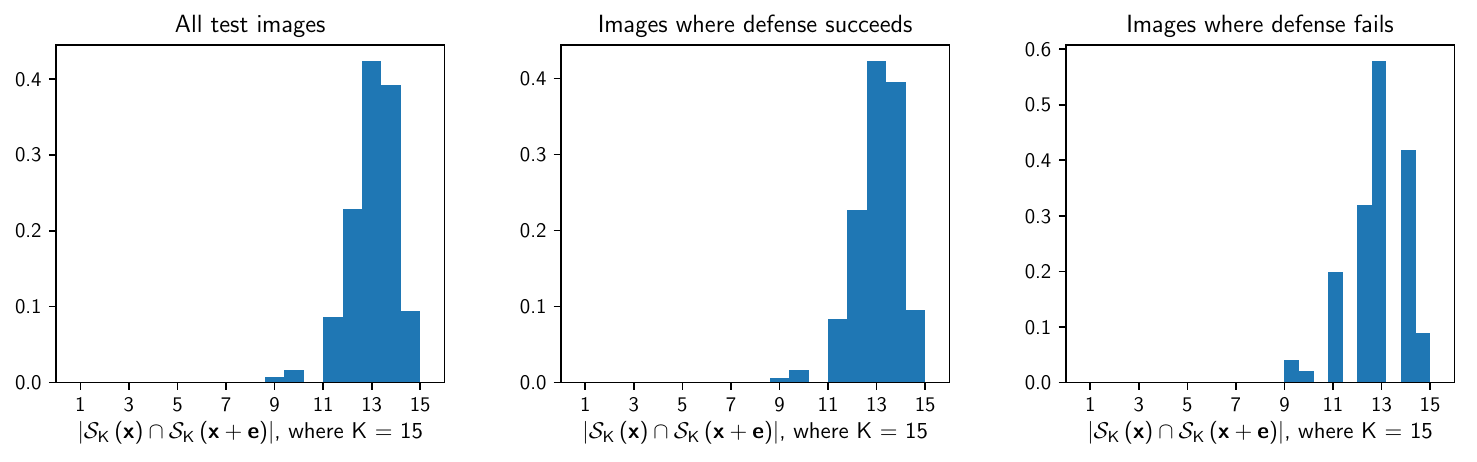}}
\par\smallskip
\subfigure[Support overlap for PGD attack (with 100 random restarts) on CNN with $\epsilon=0.2$ and $K=27$.]{\label{fig_snr_svm:b}\includegraphics[width=1.75\columnwidth]{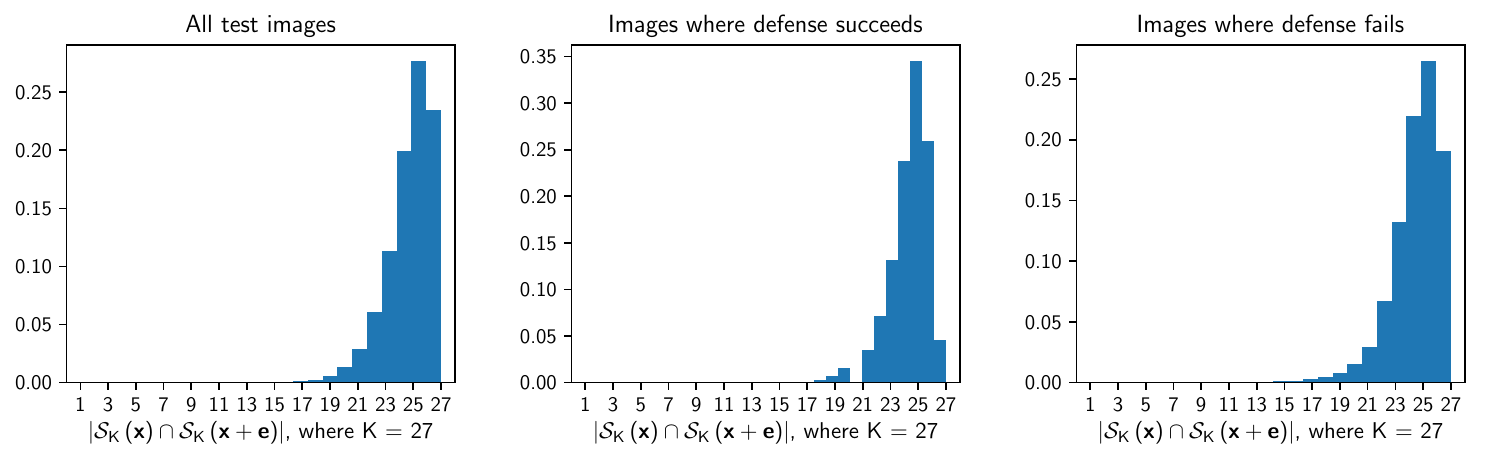}}
\caption{Histograms of support overlap, i.e.\ $\left| \support\left(\bx\right) \cap \support\left(\bx+\be\right)\right|$ for attacks on linear SVM and CNN. Plots are normalized to be probability densities, i.e.\ the area under each histogram sums to 1.}
\vspace{10pt}
\label{fig_snr_svm}
\end{figure*}

\subsection{SNR of ReLU units}

Figure \ref{fig_snr_relu} reports on the percentage of ReLU units that flip in one iteration of the iterative locally linear attack with $\delta=0.01$. When the defense is present, the high SNR condition in Section \ref{subsec:snr} is approximately satisfied. Figure \ref{fig_snr_niter_pgd} shows the evolution of the SNR condition with attack step for a 1000-step iterative FGSM attack. We can observe that on average, the percentage of ReLUs that flip in each iteration stays relatively small over attack iterations.

However, as the next section shows, it could be better for the adversary to try to make the most of the network's nonlinearity, for example by using iterative attacks with random initializations (PGD), so as to cause a large number of switches to flip to maximize the impact of the second term in Eq.\ \eqref{eq:dist}, Section \ref{subsec:snr}.

\begin{figure*}[]
\centering
\subfigure[No defense.]{\label{fig_snr_relu:a}\includegraphics[width=2\columnwidth]{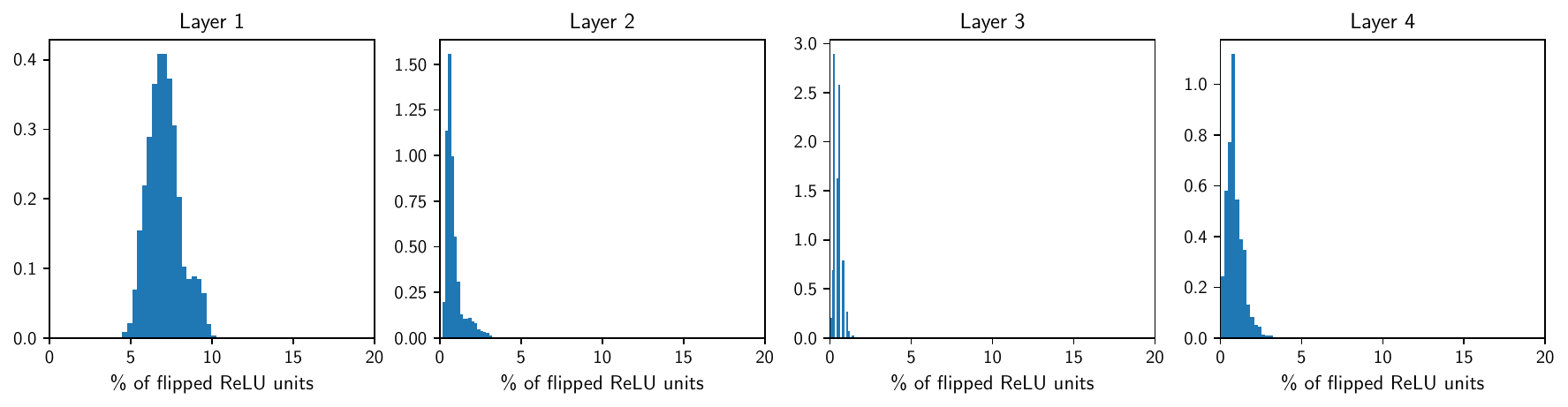}}
\subfigure[With sparsifying front end ($\rho = 3.5\%$).]{\label{fig_snr_relu:b}\includegraphics[width=2\columnwidth]{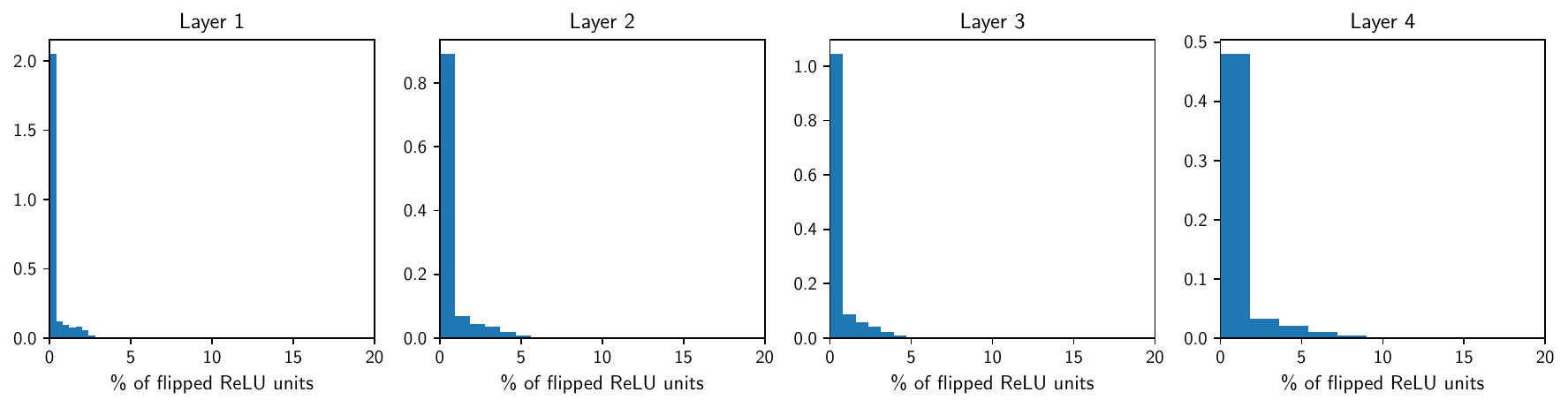}}
\caption{Histograms of the percentage of ReLU units that flip in a single step of the iterative locally linear attack with $\delta=0.01$, for the 4-layer CNN on MNIST. Plots are normalized to be probability densities.}
\label{fig_snr_relu}
\end{figure*}

\begin{figure*}[]
\centering
\subfigure{\label{fig_snr_niter_pgd:a}\includegraphics[width=0.99\columnwidth]{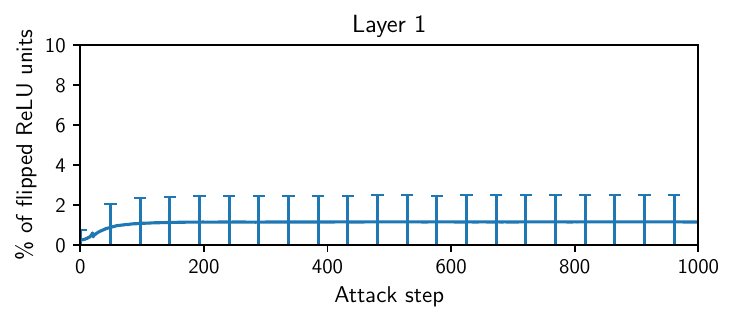}}\hfill
\subfigure{\label{fig_snr_niter_pgd:b}\includegraphics[width=0.99\columnwidth]{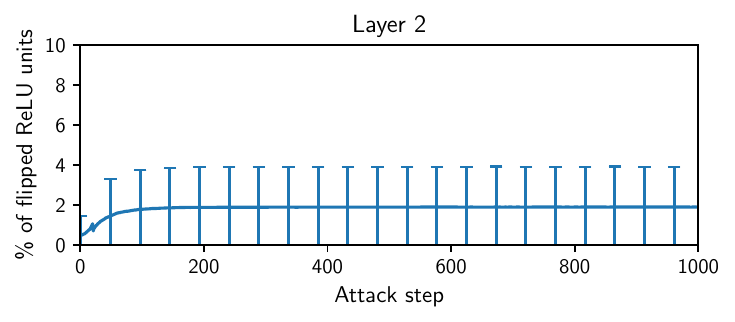}}
\par\smallskip
\subfigure{\label{fig_snr_niter_pgd:c}\includegraphics[width=0.99\columnwidth]{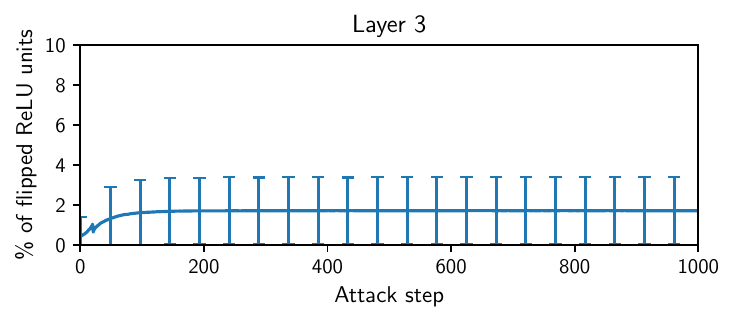}}\hfill
\subfigure{\label{fig_snr_niter_pgd:d}\includegraphics[width=0.99\columnwidth]{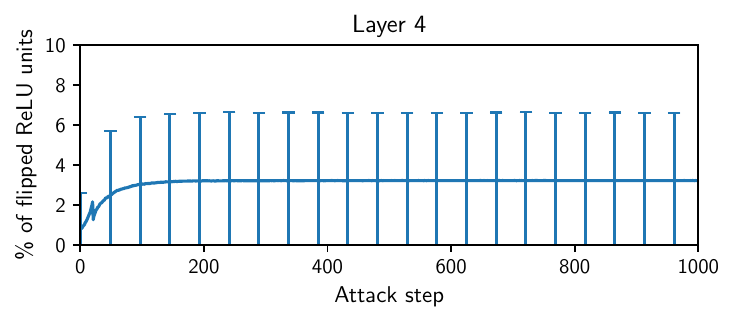}}
\caption{Plots showing the mean percentage of ReLU units that flip in each attack step, for a 1000-step iterative FGSM attack with $\delta=0.01$ and $\epsilon=0.2$ on the 4-layer CNN with defense ($\rho=3.5\%$). Error bars represent 1 standard deviation from the mean.}
\label{fig_snr_niter_pgd}
\end{figure*}

\begin{figure*}[]
\centering
\subfigure[Defense success, high support overlap.]{\label{fig_svm_images:a}\includegraphics[width=0.97\columnwidth]{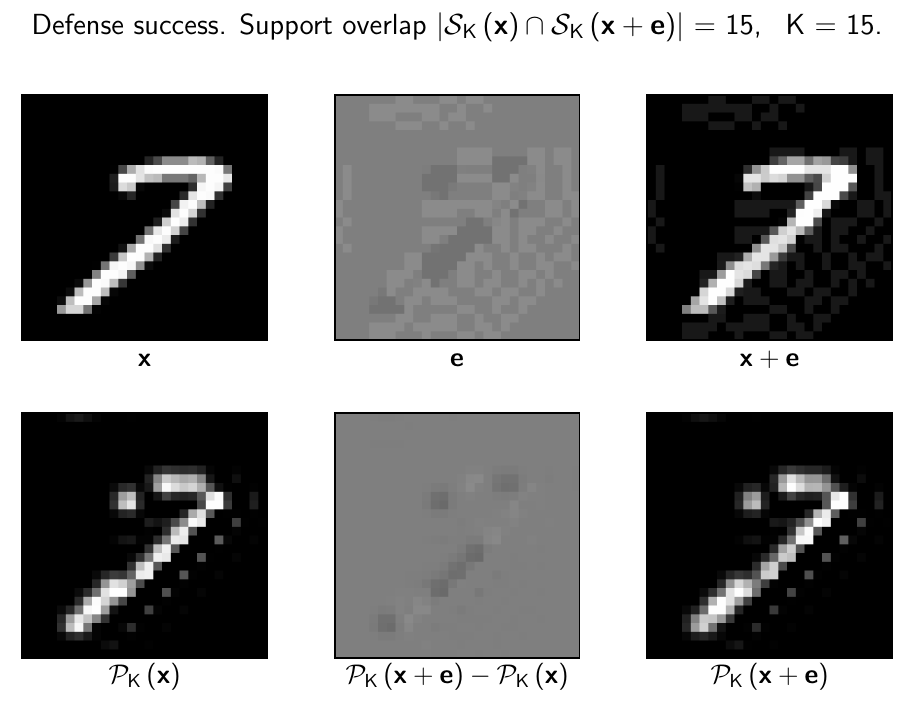}} \hfill
\subfigure[Defense success, low support overlap.]{\label{fig_svm_images:b}\includegraphics[width=0.97\columnwidth]{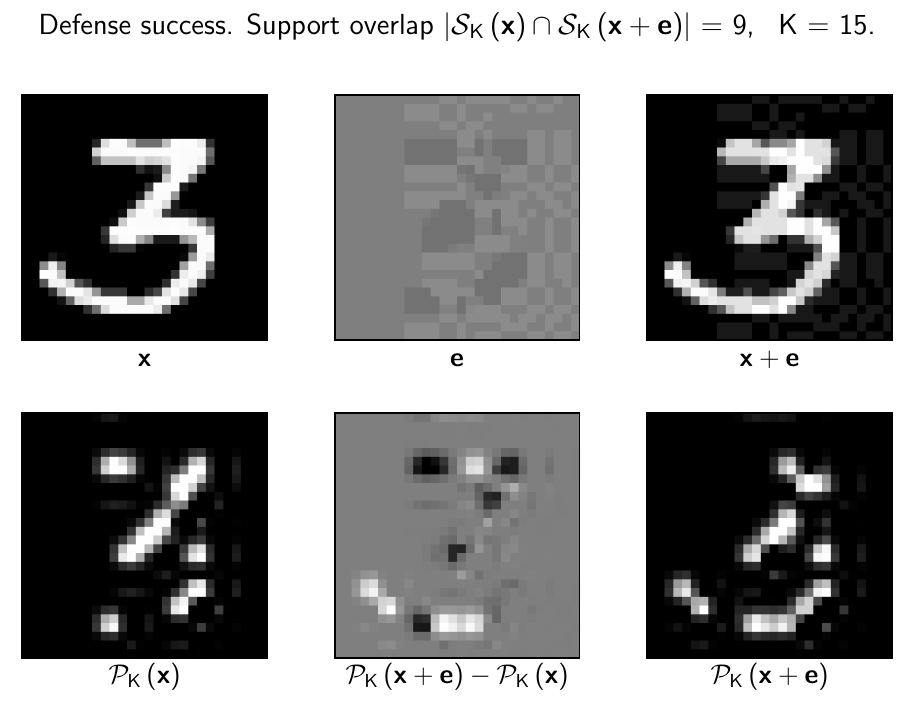}} 
\par\bigskip
\subfigure[Defense failure, high support overlap.]{\label{fig_svm_images:c}\includegraphics[width=0.97\columnwidth]{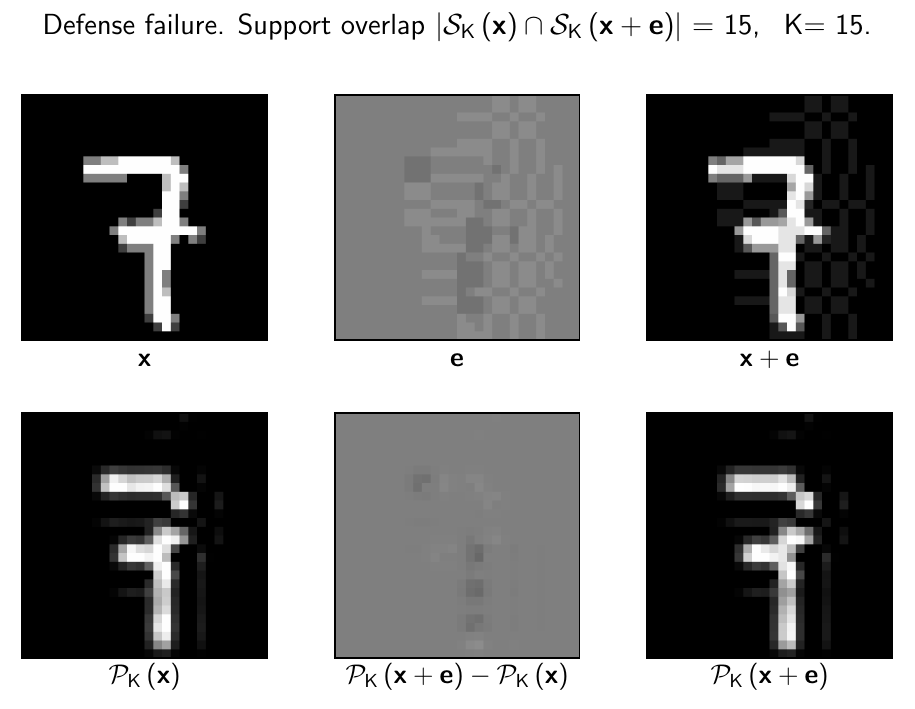}} \hfill
\subfigure[Defense failure, low support overlap.]{\label{fig_svm_images:d}\includegraphics[width=0.97\columnwidth]{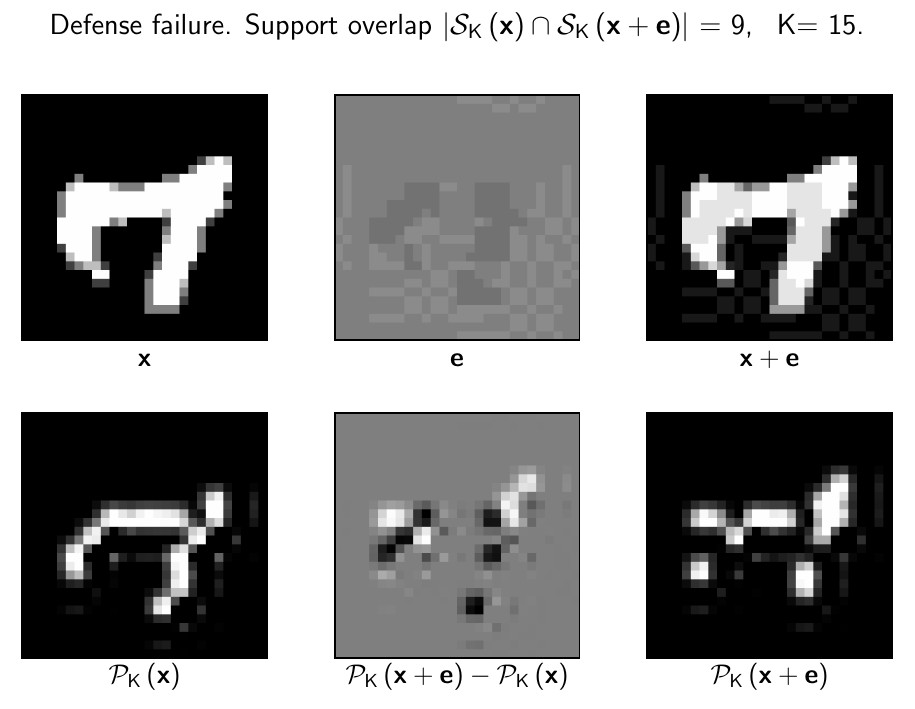}}
\caption{Sample images with low and high support overlap, for white box attack on the linear SVM with $\epsilon=0.1$ and $\rho=2\%$. The first row of each subfigure shows the original image, the perturbation and the attacked image, while the second row shows the effect of sparsification. Here $\proj\left(\bx+\be\right)$ denotes the projection of $\bx+\be$ onto its own $K$-dim.\ support i.e.\ $\support\left(\bx+\be\right)$.}
\label{fig_svm_images}
\end{figure*}

\subsection{More details on attack performance}

Table \ref{table_projections} reports on attack performance for the 3 versions described in Section \ref{exp_nn}: one that uses the backward pass differential approximation (BPDA) technique of \cite{obfuscated-gradients} to approximate the gradient of the front end as $\bm{1}$, a second version where the gradient is calculated as the projection onto the top $K$ basis vectors of the input, and a third version where we iteratively refine the projection. For iterative attacks, we refine the projection for 20 steps within each attack step.

 For PGD, we use 100 random restarts and report accuracy over the most successful restart(s) for each image.
We report on the effect of changing the per-iteration budget and number of steps for iterative FGSM in Table \ref{table_delta_niter}, with little change in accuracies beyond 100 iterations.

\begin{table}[!b]
\vspace{7pt}
\centering
\caption{Multiclass classification accuracies for 4-layer CNN on MNIST, with $\epsilon=0.2$ for attacks and $\rho = 3.5\%$ for defense. Iterative attacks were run for 1000 steps of $\delta=0.01$, except for the attacks marked $^\ast$ which use 100 steps of $\delta=0.05$. Projections were iterated for 20 steps within each attack step.}
\label{table_projections}
\begin{center}
\begin{small}
\begin{tabular}{lccc}
\toprule & 
\multicolumn{1}{c}{\begin{tabular}[l]{@{}c@{}}BPDA \\of $\bm{1}$\end{tabular}} & 
\multicolumn{1}{c}{\begin{tabular}[l]{@{}c@{}}Projections \end{tabular}} &
\multicolumn{1}{c}{\begin{tabular}[l]{@{}c@{}}Iterated \\projections \end{tabular}}\\
\midrule
Locally linear attack & 82.02 & 86.08 & 78.27\\
FGSM & 85.35 & 88.18 & 85.84 \\
\midrule
Iter.\ locally linear attack & 76.00 & 77.27 & \phantom{ }74.38$^\ast$
\\
Iter.\ FGSM & 74.97 & 79.88 & 75.33
 \\ 
Momentum iter.\ FGSM & 74.79 & 73.55 & ---
\\
PGD (100 restarts) & \phantom{ }64.62$^\ast$ & \phantom{ }65.48$^\ast$ & \phantom{ }61.04$^\ast$ 
\\
\bottomrule
\end{tabular}
\end{small}
\end{center}
\end{table}

\begin{table}[]
\vspace{7pt}
\centering
\caption{Iterative FGSM accuracies ($\epsilon=0.2$) as a function of the per-iteration budget $\delta$ and number of steps used, for the 4-layer CNN with front end ($\rho = 3.5\%$), using BPDA of $\bm{1}$.}
\label{table_delta_niter}
\begin{center}
\begin{small}
\begin{tabular}{cccc}
\toprule & 
\multicolumn{1}{c}{\begin{tabular}[l]{@{}c@{}}$\delta = 0.01$\end{tabular}} & 
\multicolumn{1}{c}{\begin{tabular}[l]{@{}c@{}}$\delta = 0.05$ \end{tabular}} &
\multicolumn{1}{c}{\begin{tabular}[l]{@{}c@{}}$\delta = 0.1$ \end{tabular}}\\
\midrule
20 steps & 80.84 & 75.52 & 75.32 \\[3pt]
100 steps & 75.13 & 75.17 & 75.02 \\[3pt]
1000 steps & 74.97 & 75.10 & 75.02 \\
\bottomrule
\end{tabular}
\end{small}
\end{center}
\end{table}

\subsection{Effect of sparsification on performance without attacks}

Sparsification causes a slight performance hit in the accuracy without attacks: for the 4-layer CNN, accuracy reduces from 99.31\% to 98.97\%. 
For the 2-layer NN,  we report implicitly on this effect in Figure \ref{fig_binary_nn} (the $\epsilon=0$ points): the accuracy goes from 99.33\% to 99.28\%.  
We note that sparsity has also been suggested purely as a means of improving classification performance (e.g., see \citet{makhzani2013ksparse}) and hence believe that with additional design effort, the performance penalty will be minimal. 

\subsection{Experiments on the Carlini-Wagner \texorpdfstring{$\ell_2$}{L2} attack}

Since the Carlini-Wagner $\ell_2$ attack does not have a fixed bound on the distance between adversarial and true images \citep{carlini2016distillationrefuted}, we report on histograms of distances in Figure \ref{fig_cw} (for the 4-layer CNN). We set the confidence level of the attack to 0, which corresponds to the smallest possible $\ell_2$ distance in the C\&W attack formulation. The final classification accuracy is 0.92\%, but 94.18\% of the perturbed images lie outside the $\ell_\infty$ budget of interest ($\epsilon$ = 0.2). The attack is successful in terms of causing misclassification, but since it is an $\ell_2$ attack, it fails to produce perturbations conforming to the $\ell_\infty$ budget. We generate the attack using the CleverHans library (v2.1.0), with default values of attack hyperparameters (listed below) and a backward pass differential approximation (BPDA) \citep{obfuscated-gradients} of $\bm{1}$ for the gradient of the front end.

\begin{lstlisting}[style=output]
 learning_rate = 5e-3, binary_search_steps = 5, max_iterations = 1000, initial_const = 1e-2.
\end{lstlisting}

\begin{figure*}[]
\centering
\subfigure[$\ell_\infty$ distance, with no defense.]{\label{cw_l_inf}\includegraphics[width=0.48\columnwidth]{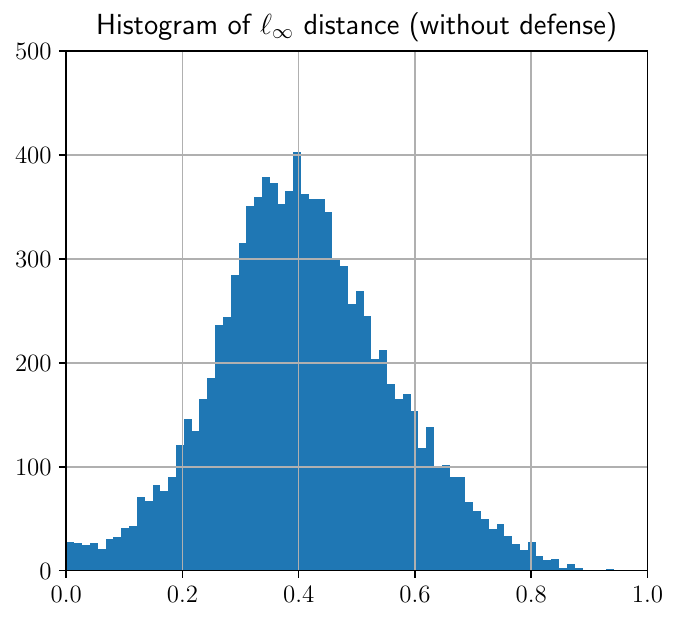}}
\hfill
\subfigure[$\ell_\infty$ distance, with sparsifying front end ($\rho$ = 3.5\%).]{\label{cw_l_inf_sp}\includegraphics[width=0.48\columnwidth]{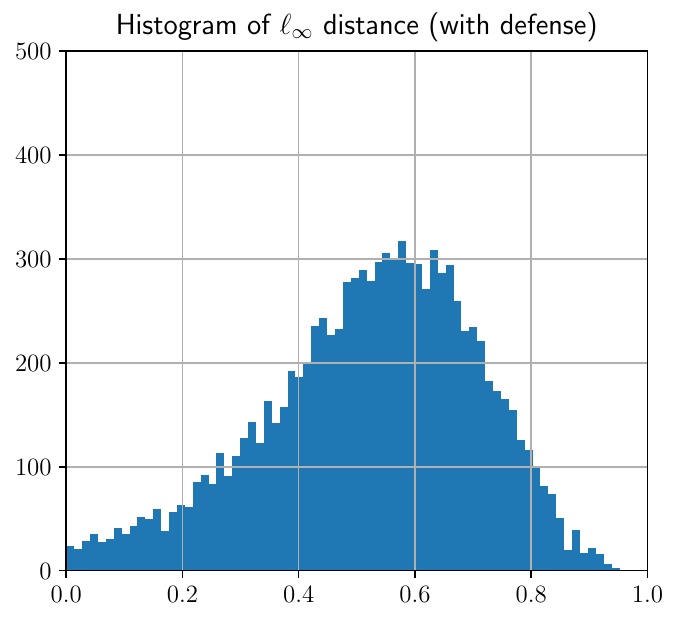}}
\hfill
\subfigure[$\ell_2$ distance, with no defense.]{\label{cw_l_2}\includegraphics[width=0.48\columnwidth]{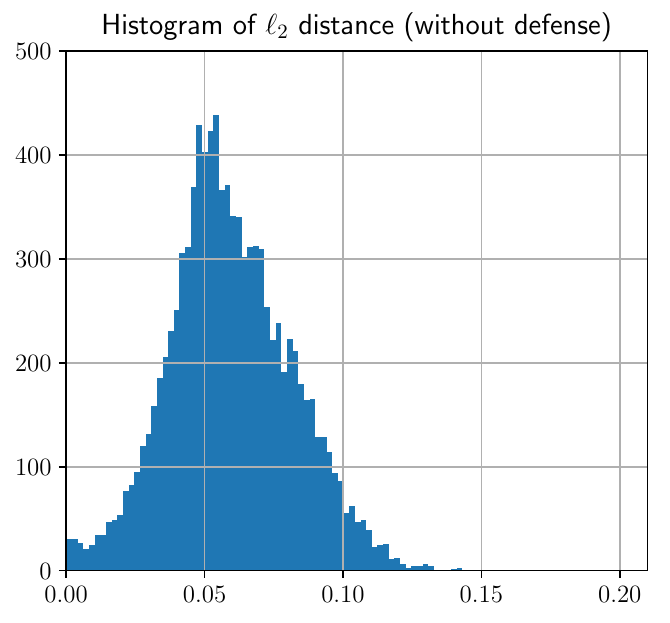}}
\hfill
\subfigure[$\ell_2$ distance, with sparsifying front end ($\rho$ = 3.5\%).]{\label{cw_l_2_sp}\includegraphics[width=0.48\columnwidth]{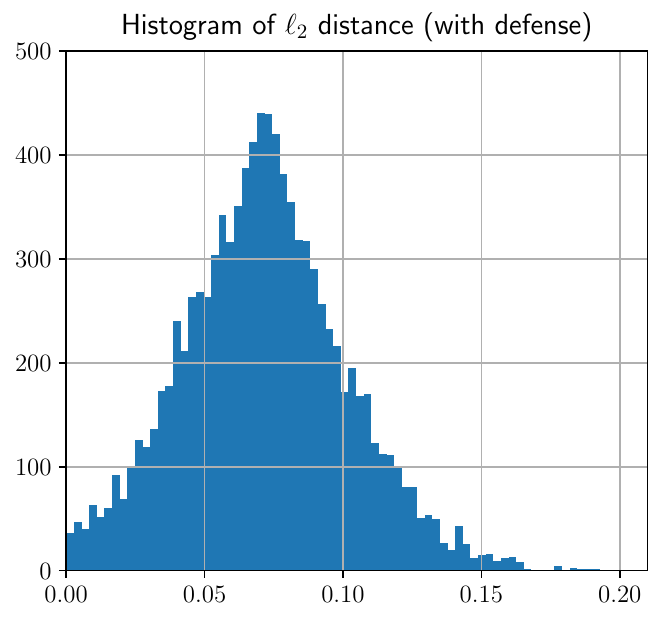}}
\vspace{-2pt}
\caption{Histograms of adversarial examples generated by the C\&W $\ell_2$ attack on the 4-layer CNN.}
\label{fig_cw}
\end{figure*}

\end{appendices}

\ifCLASSOPTIONcaptionsoff
  \newpage
\fi

\end{document}